\documentclass{article}

\usepackage{microtype}
\usepackage{graphicx}
\usepackage{subfigure}
\usepackage{booktabs} % for professional tables

\usepackage[utf8]{inputenc}
\usepackage{natbib}
\usepackage{xcolor}
\usepackage{enumitem}

% For theorems and such
\usepackage{amsmath}
\usepackage{amssymb}
\usepackage{mathtools}
\usepackage{amsthm}
\usepackage{hyperref}

\usepackage[accepted]{icml2024}
%%%%% NEW MATH DEFINITIONS %%%%%

\usepackage{amsmath,amsfonts,bm}

% Mark sections of captions for referring to divisions of figures

% Highlight a newly defined term

% Figure reference, lower-case.

% Figure reference, capital. For start of sentence

% Section reference, lower-case.

% Section reference, capital.

% Reference to two sections.

% Reference to three sections.

% Reference to an equation, lower-case.
\def\eqref#1{equation~\ref{#1}}
% Reference to an equation, upper case

% A raw reference to an equation---avoid using if possible

% Reference to a chapter, lower-case.

% Reference to an equation, upper case.

% Reference to a range of chapters

% Reference to an algorithm, lower-case.

% Reference to an algorithm, upper case.

% Reference to a part, lower case

% Reference to a part, upper case

\def\1{\bm{1}}

% Random variables

% rm is already a command, just don't name any random variables m

% Random vectors

% Elements of random vectors

% Random matrices

\def\rmC{{\mathbf{C}}}

\def\rmE{{\mathbf{E}}}

\def\rmG{{\mathbf{G}}}
\def\rmH{{\mathbf{H}}}

\def\rmO{{\mathbf{O}}}
\def\rmP{{\mathbf{P}}}

\def\rmV{{\mathbf{V}}}

% Elements of random matrices

% Vectors

\def\va{{\bm{a}}}
\def\vb{{\bm{b}}}

% Elements of vectors

% Matrix

% Tensor
\DeclareMathAlphabet{\mathsfit}{\encodingdefault}{\sfdefault}{m}{sl}
\SetMathAlphabet{\mathsfit}{bold}{\encodingdefault}{\sfdefault}{bx}{n}

% Graph

\def\gC{{\mathcal{C}}}
\def\gD{{\mathcal{D}}}

\def\gN{{\mathcal{N}}}

\def\gP{{\mathcal{P}}}

\def\gR{{\mathcal{R}}}

\def\gU{{\mathcal{U}}}

\def\gX{{\mathcal{X}}}
\def\gY{{\mathcal{Y}}}

% Sets

% Don't use a set called E, because this would be the same as our symbol
% for expectation.

\def\sU{{\mathbb{U}}}

% Entries of a matrix

% entries of a tensor
% Same font as tensor, without \bm wrapper

% The true underlying data generating distribution

% The empirical distribution defined by the training set

% The model distribution

% Stochastic autoencoder distributions

 % Laplace distribution

% Wolfram Mathworld says $L^2$ is for function spaces and $\ell^2$ is for vectors
% But then they seem to use $L^2$ for vectors throughout the site, and so does
% wikipedia.

 % See usage in notation.tex. Chosen to match Daphne's book.

% \input{input_packages}

% \newcommand{\theHalgorithm}{\arabic{algorithm}}
%%%%%%%%%%%%%%%%%%%%%%%%%%%%%%%%
% THEOREMS
%%%%%%%%%%%%%%%%%%%%%%%%%%%%%%%%
\theoremstyle{plain}
\newtheorem{theorem}{Theorem}[section]

\theoremstyle{definition}

\theoremstyle{remark}

\icmltitlerunning{Parameter Estimation in DAGs from Incomplete Data via Optimal Transport}

\begin{document}
\twocolumn[
\icmltitle{Parameter Estimation in DAGs from Incomplete Data via Optimal Transport}

\icmlsetsymbol{equal}{*}

\begin{icmlauthorlist}
\icmlauthor{Vy Vo}{yyy,ccc}
\icmlauthor{Trung Le}{yyy}
\icmlauthor{Tung-Long Vuong}{yyy,vvv}
\icmlauthor{He Zhao}{ccc}
\icmlauthor{Edwin V. Bonilla}{ccc}
\icmlauthor{Dinh Phung}{yyy,vvv}
\end{icmlauthorlist}

\icmlaffiliation{yyy}{Monash University, Australia}
\icmlaffiliation{ccc}{CSIRO's Data61, Australia}
\icmlaffiliation{vvv}{VinAI Research, Vietnam}

\icmlcorrespondingauthor{Vy Vo}{v.vo@monash.edu}

\icmlkeywords{Machine Learning, ICML}
\vskip 0.3in
]

\printAffiliationsAndNotice{} % otherwise use the standard text.
\begin{abstract}
Estimating the parameters of a probabilistic directed graphical model from incomplete data is a long-standing challenge. This is because, in the presence of latent variables, both the likelihood function and posterior distribution are intractable without  assumptions about structural dependencies or model classes. While existing learning methods are fundamentally based on likelihood maximization, here we offer a new view of the parameter learning problem through the lens of optimal transport. This perspective licenses a general framework that operates on any directed graphs without making unrealistic assumptions on the posterior over the latent variables or resorting to variational approximations. We develop a theoretical framework and support it with extensive empirical evidence demonstrating the versatility and robustness of our approach. Across experiments, we show that not only can our method effectively recover the ground-truth parameters but it also performs comparably or better than competing baselines on downstream applications. 

\end{abstract}

\section{Introduction}\label{sect:intro}
Learning probabilistic directed graphical models (DGMs, also known as Bayesian networks) with latent variables is an ongoing challenge in machine learning and statistics. This paper focuses on parameter learning, i.e., estimating the parameters of a DGM with its structure known. Learning DGMs has a long history, dating back to classical indirect likelihood-maximization approaches such as expectation maximization \citep[EM,][]{Dempster1977}. Despite all its success stories, EM is known to suffer from local optima issues. More importantly, EM becomes inapplicable when the posterior distribution is intractable, which arises fairly often in practice.  Furthermore, EM is originally a batch algorithm, thereby converging slowly on large datasets \citep{liang2009online}. Subsequently, researchers have explored combining EM with approximate inference along with other strategies to improve efficiency \citep{wei1990monte,neal1998view,delyon1999convergence,beal2006variational,cappe2009line,liang2009online,neath2013convergence}. A large family of approximation algorithms based on variational inference \citep[VI,][]{jordan1999introduction,hoffman2013stochastic} have demonstrated tremendous potential, where the evidence lower bound (ELBO) is not only used for posterior approximation but also for point estimation of the model parameters. Such an approach has proved effective and robust to overfitting, especially when having a small number of parameters. VI has recently taken a leap forward by embracing amortized inference \citep{amos2022tutorial}, which performs black-box optimization in a considerably more efficient way.

Prior to parameter estimation, both EM and VI consist of an inference step which ultimately requires carrying out expectations of the commonly intractable posterior over the latent variables. In order to address this challenge, a large spectrum of methods have been proposed in the literature and we refer the reader to  \citet{pmlr-v130-ambrogioni21a} for an excellent discussion of these approaches. Here we characterize them between two extremes. At one extreme, restrictive assumptions about the structure (e.g., as in mean-field approximations) or the model class (e.g., using conjugate exponential families) must be made to simplify the task. At the other extreme, when no assumptions are made, most existing black-box methods exploit very little information about the structure of the known probabilistic model, e.g., in black-box and stochastic VI \citep{ranganath2014black,hoffman2013stochastic}, hierarchical approaches \citep{ranganath2016hierarchical} or normalizing flows \citep{Papamakarios2021}. Section \ref{sect:rwork} summarizes the progression of VI research towards this extreme. Since the ultimate goal of VI is posterior inference, parameter estimation has been treated as a by-product of the optimization process where the model parameters are cast as global latent variables. As the complexity of the graph increases, parameter estimation in VI becomes computationally challenging. 

A natural question arises as to whether one can learn the parameters of any DGMs with hidden nodes without explicitly solving inference nor assuming any structural independencies. In this work, we revisit the classic problem of learning graphical models from the viewpoint of optimal transport \citep[OT,][]{villani2009optimal}, which permits a scalable and general framework that addresses the above criterion.   

\paragraph{OT as an alternative to MLE.} Estimating the model parameters is essentially about learning a probability density from empirical data. EM and VI are fundamentally based on maximum likelihood estimation (MLE), which amounts to, asymptotically, minimizing the KL (Kullback-Leibler) divergence between the true data and model distribution. We here propose to find a point estimate that minimizes the Wasserstein (WS) distance \citep{kantorovich1960mathematical} between these two distributions. The motivations of using WS distance to this problem are three-fold.

\textbf{First,} the measurability and consistency of the minimum Wasserstein estimators have been rigorously studied in prior research, notably in \citet{bassetti2006minimum,bernton2019parameter}. \textbf{Second,} WS distance is a metric, thus serving as a more sensible measure of distance between two distributions, especially those that are supported on low dimensional manifolds with negligible intersection of support, where standard metrics such as the KL or JS (Jensen-Shannon) divergences are either infinite or undefined \citep{peyre2017computational,Ambrogioni2018}. 

\textbf{Finally,} we substantiate the motivation of using OT for graphical learning with an additional experiment of learning GMM under mis-specifications. The task is to estimate the means of a mixture of two bi-variate Gaussian distributions with unit variance i.e., $\sigma_1 = \sigma_2 = \mathbf{I}$. 
The means of one Gaussian are $\mu_{11}, \mu_{12} \sim \gU(0,2)$ and the means of the other are $\mu_{21}, \mu_{22} \sim \gU(0,2)$. The mixture weight is $\pi \sim \gU(0.50, 0.70)$. Figure \ref{fig:misgmm} illustrates the mean absolute errors when (1) the variances are mis-specified at $\varepsilon_{\sigma} \mathbf{I}$ where $\varepsilon_{\sigma} \sim \gU(1,2)$; (2) the weights are mis-specified at $\varepsilon_{\pi} \sim \gU(0,1)$; (3) both are mis-specified. We compare EM with our proposed method that estimates the means by the minimum Wasserstein estimators. The figures show that while EM plateaus early on, our method continues to converge over training. This reaffirms that minimum Wasserstein estimators tend to be more reliable and robust under mis-specifications \citep{bernton2019parameter}. Despite the above desirable properties of the WS distance, the application of OT to estimating the parameters of a general DGM remains underexplored. Our work is proposed to fill in this gap. 

\paragraph{Contributions.} In this work, we introduce \textbf{OTP-DAG}, an \textbf{O}ptimal \textbf{T}ransport framework for \textbf{P}arameter Learning in \textbf{D}irected \textbf{A}cyclic \textbf{G}raphical models\footnote{Our code is published at \url{https://github.com/isVy08/OTP}.}. OTP-DAG is a flexible framework applicable to any type of variables and graphical structures. Our theoretical development renders a tractable formulation of the Wasserstein objective for models with latent variables, which is established as a generalization for the WAE model. We further provide empirical evidence demonstrating the versatility of our method on various graphical structures, where OTP-DAG is shown to successfully recover the ground-truth parameters and achieve comparable or better performance than competing methods across a range of downstream applications. 

\begin{figure}
    \centering\includegraphics[width=\linewidth]{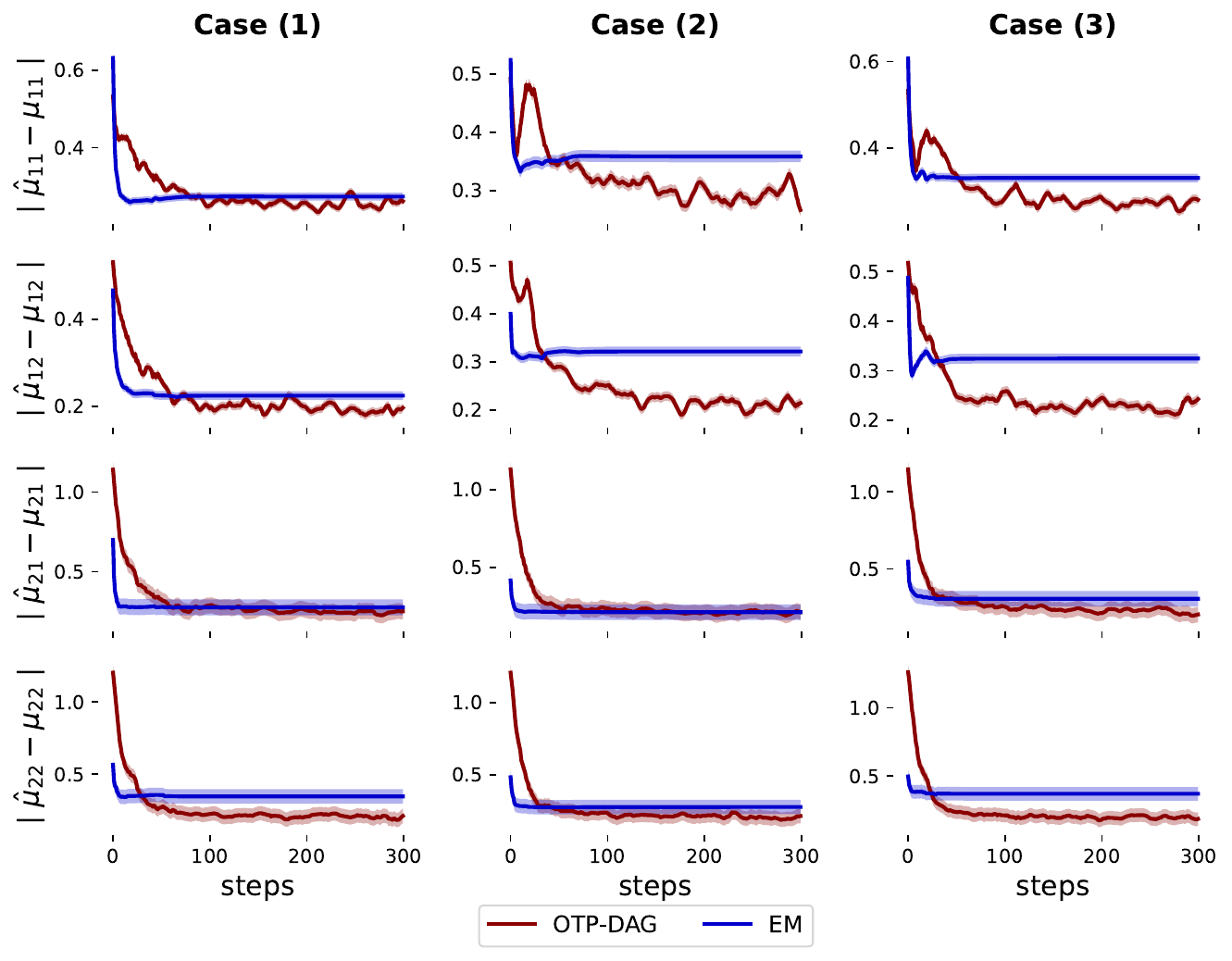}
    \caption{Visualization of mean absolute errors of the inferred means $\hat{\mu}$ and the true values $\mu$ for $300$ steps, averaged over $100$ simulations. $\mu_{ki}$ indicates the mean of the component $k$ at dimension $i$. The red line represents \textbf{our method \textcolor{red}{OTP-DAG}}. The blue line represents \textbf{\textcolor{blue}{EM}}. Three mis-specified cases are studied: \textbf{Case (1)} mis-specified variances, \textbf{Case (2)} mis-specified weights and \textbf{Case (3)} mis-specified both variances and weights.}
    \label{fig:misgmm}
\end{figure}

\section{Related work}\label{sect:rwork}
\paragraph{Variational Inference.} As part of parameter learning, both EM and VI entail an inference sub-process for posterior estimation. If the posteriors cannot be computed exactly, approximate inference is the go-to solution. In this section, we focus on variational algorithms and their computational challenges. Along this line, research efforts have concentrated on ensuring tractability of the ELBO via the mean-field assumption \citep{bishop2006pattern} and its relaxation known as structured mean field \citep{saul1995exploiting}. Scalability has been one of the main challenges facing the early VI formulations since the optimization is done on a per-sample basis. This has triggered the development of stochastic variational inference \citep[SVI,][]{hoffman2013stochastic,hoffman2015structured,foti2014stochastic,johnson2014stochastic,anandkumar2012method,anandkumar2014tensor} which applies stochastic optimization to solve VI objectives. 

Another line of work is collapsed VI that explicitly integrates out certain model parameters or latent variables in an analytic manner \citep{hensman2012fast,king2006fast,teh2006collapsed,lazaro2012overlapping}. Without a closed form, one could resort to Markov chain Monte Carlo \citep{gelfand1990sampling,gilks1995markov,hammersley2013monte}, which however tends to be slow. More accurate variational posteriors also exist, namely, through hierarchical variational models \citep{ranganath2016hierarchical}, implicit posteriors \citep{titsias2019unbiased,yin2018semi,molchanov2019doubly,titsias2019unbiased}, normalizing flows \citep{kingma2016improved}, or copulas \citep{tran2015copula}. 

To avoid computing ELBO analytically, one can obtain an unbiased gradient estimator using re-parameterization tricks \citep{ranganath2014black,xu2019variance}.  Extensions of VI to other divergence measures such as $\alpha-$divergence or $f-$divergence, also exist in \citet{li2016renyi,hernandez2016black,wan2020f}. A thorough review the above approaches can be found in \citet[][\S6]{pmlr-v130-ambrogioni21a}. In the causal inference literature, a related direction is to learn both the graphical structure and parameters of the corresponding structural equation model \citep{yu2019dag,geffner2022deep}. These frameworks are often limited to additive noise models while assuming no latent confounders.

\paragraph{Optimal Transport.} Optimal transport (OT) studies the optimal transportation of mass from one distribution to another \cite{villani2009optimal}. Through the notion of Wasserstein distance, OT offers a geometrically meaningful distance between probability distributions, proving effectiveness in various machine learning domains ~\cite{huynh2020otlda,zhao2020neural,nguyen2021most,wanrepresenting2022,bui2021unified,nguyen2022cycle,vuong2023vector,ye2024ptarl,gao2024distribution,luong2024revisiting,vo2024optimal}. 

Particularly, there has been a surge in OT application to generative models, namely Wasserstein GANs \citep[WGAN,][]{adler2018banach,arjovsky2017wasserstein} and Wasserstein Auto-encoders \citep[WAE,][]{tolstikhin2017wasserstein}. Underlying WAE is basically a two-node graphical model with one observed node (i.e., the data) and one latent node (i.e., often a Gaussian prior). There has also been application of OT for learning Gaussian mixture models (GMM), which are also two-node graphical models where the latent variable (i.e., the mixture weight) is categorical. \citet{mena2020sinkhorn} proposes an algorithm named Sinkhorn EM using entropic OT loss which yields faster convergence rate than vanilla EM. \citet{kolouri2018sliced} studies the extension of the Wasserstein mean problem \cite{ho2017multilevel} to learn a GMM, showing that the Wasserstein energy landscape is smoother and less sensitive to the initial point than that of the negative log likelihood. 

\section{Preliminaries}\label{sect:background}
We first introduce the notations and basic concepts used throughout the paper. We reserve bold capital letters (e.g., $\rmG$) for notations related to graphs. We use calligraphic letters (e.g., $\gX$) for spaces, italic capital letters (e.g., $X$) for random variables, and lower case letters (e.g., $x$) for values.  

\paragraph{Directed Graphical Models.} A directed graph $\rmG = (\rmV, \rmE)$ consists of a set of nodes $\rmV$ and an edge set $\rmE \subseteq \rmV^2$  of ordered pairs of
nodes with $(v, v) \notin \rmE$ for any $v \in \rmV$ (one without self-loops). For a pair of nodes $i,j$ with $(i,j) \in \rmE$, there is an arrow pointing from $i$ to $j$ and we write $i \rightarrow j$. Two nodes $i$ and $j$ are adjacent if either $(i,j) \in \rmE$ or $(j,i) \in \rmE$. If there is an arrow from $i$ to $j$ then $i$ is a parent of $j$ and $j$ is a child of $i$. A Bayesian network structure $\rmG = (\rmV, \rmE)$ is a \textbf{directed acyclic graph} (DAG), in which the nodes represent random variables $X = [X_i]^{n}_{i=1}$ with index set $\rmV := \{1,...,n\}$. Let $\mathrm{PA}_{X_i}$ denote the set of variables associated with parents of node $i$ in $\rmG$.  
In this work, we tackle the classic problem of learning the parameters of a directed graph from \textit{partially observed data}. Let $\rmO \subseteq \rmV$ and $X_\rmO = [X_i]_{i \in \rmO}$ be the set of observed nodes and $\rmH := \rmV \backslash \rmO$ be the set of hidden nodes. 
Let $P_{\theta}$ and $P_{d}$ respectively denote the distribution induced by the graphical model and the empirical one induced by the \textit{complete} (yet unknown) data. Given a fixed graphical structure $\rmG$ and some set of i.i.d data points, we aim to find the point estimate $\theta^{*}$ that best fits the observed data $X_{\rmO}$. The conventional approach is to minimize the KL divergence between the model distribution and the \textit{empirical} data distribution over observed data i.e., $\textrm{KL}(P_d(X_{\rmO}), P_{\theta}(X_{\rmO}))$, which is equivalent to maximizing the likelihood  $P_{\theta}(X_{\rmO})$ w.r.t $\theta$. 
In the presence of latent variables, the marginal likelihood, given as $P_{\theta}(X_\rmO) = \int_{X_{\rmH}} P_{\theta}(X) dX_{\rmH}$, is generally intractable. 

\paragraph{Optimal transport.}  Let $\alpha = \sum^{n}_{j=1} a_j \delta_{x_j}$ be a discrete measure with weights $\va$ and particles $x_1, \cdots, x^n \in \gX$ where $\va \in \Delta^{n}$, a $(n-1)-$dimensional probability simplex. Let $\beta = \sum^{n}_{j=1} b_j \delta_{y_j}$ be another discrete measure defined similarly. The Kantorovich \citep{kantorovich2006problem} formulation of the OT distance between two discrete distributions $\alpha$ and $\beta$ is 
\begin{equation}\label{eq:kanto_disc}
    W_c \left(\alpha, \beta \right) := \inf_{\rmP \in \sU(\va,\vb)} \langle \rmC, \rmP \rangle,
\end{equation}
where $\langle \cdot, \cdot \rangle$ denotes the Frobenius dot-product; $\rmC \in \mathbb{R}^{n \times n}_{+}$ is the cost matrix of the transport; $\rmP \in \mathbb{R}^{n \times n}_{+}$ is the transport matrix/plan; $\sU(\va,\vb) := \left\{ \rmP \in \mathbb{R}^{n \times n}_{+} : \rmP \mathbf{1}_{n} = \va, \rmP \mathbf{1}_{n} = \vb \right\} $ is the transport polytope of $\va$ and $\vb$; $\mathbf{1}_{n}$ is the $n-$dimensional column of vector of ones. For arbitrary measures, Eq. (\ref{eq:kanto_disc}) can be generalized as 
\begin{equation}\label{eq:kanto_cont}
    W_c \left(\alpha; \beta\right) := \underset{\Gamma \sim \gP (X \sim \alpha, Y \sim \beta)}{\mathrm{inf}} \mathbb{E}_{(X, Y) \sim \Gamma} \bigl[ c(X,Y)\bigr],
\end{equation}
where the infimum is taken over the set of all joint distributions $\gP (X \sim \alpha, Y \sim \beta)$ with marginals $\alpha$ and $\beta$ respectively. $c: \gX \times \gY \mapsto \mathbb{R}$ is any measurable cost function. If $c(x,y) = D^p(x,y)$ is a distance for $p \le 1$, $W_p$, the $p$-root of $W_c$, is called the $p$-Wasserstein distance.  Finally, for a measurable map $T : \gX \mapsto \gY$, $T\#\alpha$ denotes the push-forward measure of $\alpha$ that, for any measurable set $B \subset \gY$, satisfies $T\#\alpha(B) = \alpha(T^{-1}(B))$. For discrete measures, the push-forward operation is $T\#\alpha = \sum^{n}_{j=1} a_j \delta_{T(x_j)}$.

\section{Optimal Transport for Learning Directed Graphical Models}\label{sect:method}
We consider a DAG $\rmG(\rmV, \rmE)$ over random variables $X = [X_i]^{n}_{i=1}$ that represents the data generative process of an underlying system. The system consists of $X$ as the set of endogenous variables and $U = \{U_i\}^{n}_{i=1}$ as the set of exogenous variables representing external factors affecting the system. Associated with every $X_i$ is an exogenous variable $U_i$ whose values are sampled from a prior distribution $P(U)$ independently from the other exogenous variables. For the purpose of theoretical development, our framework operates on an extended graph consisting of both endogenous and exogenous nodes (See Figure \ref{fig:main}). In the graph $\rmG$, $U_i$ is represented by a node with no ancestors that has an outgoing arrow towards  its associated endogenous variable $X_i$. Every distribution $P_{\theta_i} \big(X_i \vert \mathrm{PA}_{X_i} \big)$ can be reparameterized into a deterministic assignment
\begin{equation*}
  X_i := \psi_i \big(\mathrm{PA}_{X_i}, U_i\big), \ \text{for  } i = 1, ..., n.  
\end{equation*}

The ultimate goal is to estimate $\theta =\{\theta_i\}_{i=1}^{n}$ as the parameters of the set of deterministic functions $\psi = \{\psi_i\}_{i=1}^{n}$. We will use the notation $\psi_{\theta}$ to emphasize this connection from now on. Given the empirical data distribution $P_d(X_{\rmO})$ and the model distribution $P_{\theta}(X_{\rmO})$ over the observed set $\rmO$, the optimal transport goal is to find the parameter set $\theta$ that minimizes the cost of transport between these two distributions. The Kantorovich’s formulation of the problem is given by
\begin{equation}\label{eq:kanto}
    W_c \big(P_d; P_{\theta}\big) := \inf_{\Gamma \sim \gP (X \sim P_d, Y \sim P_{\theta})}\mathbb{E}_{(X, Y) \sim \Gamma} \bigl[ c(X,Y)\bigr],
\end{equation}
where $\gP (X \sim P_d, Y \sim P_{\theta})$ is a set of all joint distributions of $\big(P_d; P_{\theta} \big)$; $c: \gX_{\rmO} \times \gX_{\rmO} \mapsto \gR_+$ is any measurable cost function over $\gX_{\rmO}$ (i.e., the product space of the spaces of observed variables) defined as $c(X_\rmO, Y_\rmO) := \sum_{i \in \rmO} c_i(X_i, Y_i)$ where $c_i$ is a measurable cost function over a space of an observed variable.

Since $P_{\theta}$ is intractable due to the latent factor, the formulation in Eq. (\ref{eq:kanto}) cannot be directly optimized. We now propose our solution to this optimization problem (OP). 

Let $P_{\theta} (\mathrm{PA}_{X_i}, U_i)$ denote the joint distribution of $\mathrm{PA}_{X_i}$ and $U_i$ factorized according to the graphical model. Let $\gU_i$ denote the space over random variable $U_i$. The key ingredient of our theoretical development is local backward mapping. For every observed node $i \in \rmO$, we define a stochastic ``backward'' map $\phi_i  : \gX_i \mapsto \Pi_{k \in \mathrm{PA}_{X_i}} \ \gX_k \times \gU_i$ such that $\phi_i \in \mathfrak{C}(X_i)$ where $\mathfrak{C}(X_i)$ is the constraint set given as 
\begin{align*}
    \mathfrak{C}(X_i) := \bigl\{\phi_i : \phi_{i}\# P_d(X_i) = P_{\theta} (\mathrm{PA}_{X_i}, U_i) \bigr\};
\end{align*}
that is, every backward $\phi_{i}\#$ defines a push forward operator such that the samples from $\phi_i(X_i)$ follow the marginal distribution $P_{\theta}(\mathrm{PA}_{X_i}, U_i)$. Let $P_{\phi_i}(\mathrm{PA}_{X_i}, U_i) = \mathbb{E}_{X_i}\left[ \phi_i(\mathrm{PA}_{X_i}, U_i \vert X_i) \right]$ denote the marginal distribution induced by every $\phi_i$. 

\begin{figure*}[ht!]
\centering
     \begin{subfigure}
         \centering\includegraphics[width=0.2\linewidth]{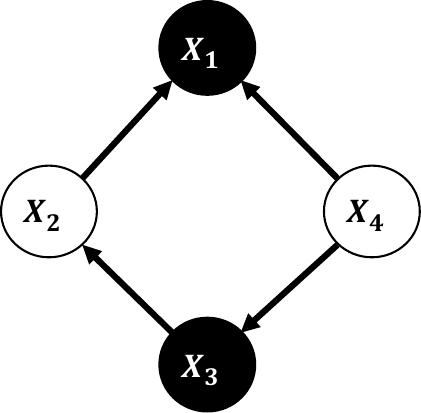}
         % \caption{DAG}
         % \label{fig:dag}
     \end{subfigure} ~
     \begin{subfigure}
         \centering
         \includegraphics[width=0.35\linewidth]{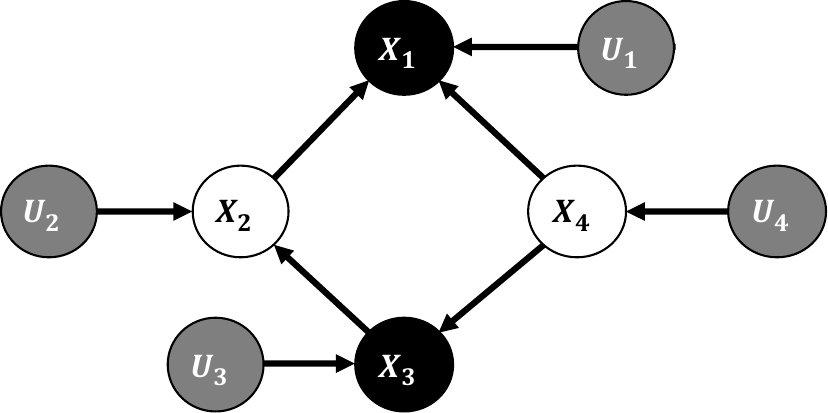}
         % \caption{Extended DAG}
         % \label{fig:extended_dag}
     \end{subfigure} ~
     \begin{subfigure}
         \centering
         \includegraphics[width=0.35\linewidth]{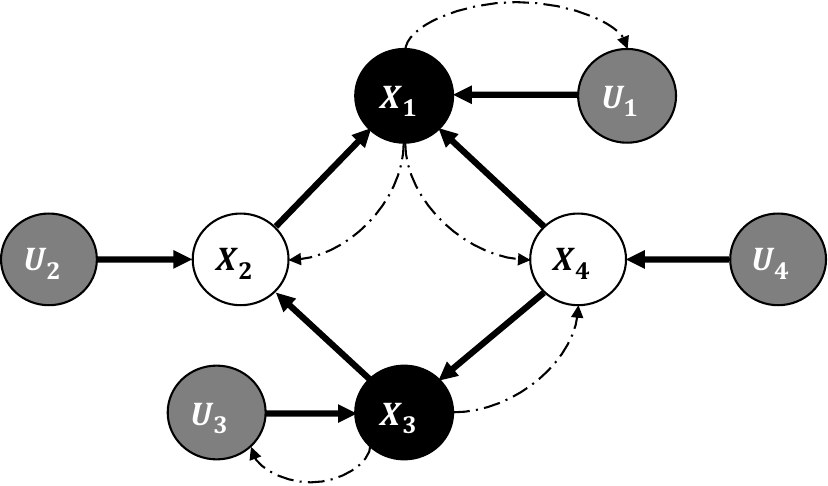}
         % \caption{Algorithmic DAG}
         % \label{fig:algo_dag}
     \end{subfigure}
      \caption{\textbf{(Left)} A DAG represents a system of $4$ endogenous variables where $X_1, X_3$ are observed (black-shaded) and $X_2, X_4$ are hidden variables (non-shaded). \textbf{(Middle)} The extended DAG includes an additional set of independent exogenous variables $U_1, U_2, U_3, U_4$ (grey-shaded) acting on each endogenous variable. $U_1, U_2, U_3, U_4 \sim P(U)$ where $P(U)$ is a prior product distribution. \textbf{(Right)} Visualization of our backward-forward algorithm, where the dashed arcs represent the backward maps involved in optimization. 
     }
\label{fig:main}   
\end{figure*}

We will show that the OP in (\ref{eq:kanto}) amounts to minimizing the reconstruction error between the observed data and the data generated from $P_{\theta}$. To understand how the reconstruction works, let us examine the right illustration in Figure \ref{fig:main}. With a slight abuse of notations, for every $X_i$, we extend its parent set $\mathrm{PA}_{X_i}$ to include an exogenous variable and possibly some other endogenous variables. Given $X_1$ and $X_3$ as observed nodes, we first sample $X_1 \sim P_d(X_1), X_3 \sim P_d(X_3)$ and then construct backward maps $\phi_1, \phi_3$. The next step is to sample $\textrm{PA}_{X_1} \sim \phi_1( \textrm{PA}_{X_1} \vert X_1)$ and $\textrm{PA}_{X_3} \sim \phi_3( \textrm{PA}_{X_3} \vert X_3)$, where $\textrm{PA}_{X_1} = \{X_2, X_4, U_1\}$ and $\textrm{PA}_{X_3} = \{X_4, U_3\}$, which are plugged back to the model $\psi_{\theta}$ to obtain the reconstructions $\widetilde{X_1} = \psi_{\theta_1}(\textrm{PA}_{X_1})$ and $\widetilde{X_3} = \psi_{\theta_3}(\textrm{PA}_{X_3})$. We wish to learn $\theta$ such that $X_1$ and $X_3$ are reconstructed correctly. For a general graphical model, this optimization objective is formalized as

\begin{theorem}\label{theorem:1} For every $\phi_i$ as defined above and fixed $\psi_{\theta}$, 
\begin{align}\label{eq:primalobj}
& W_c \big(P_d(X_\rmO); P_{\theta}(X_\rmO)\big) = \nonumber  \\ 
& \inf_{\bigl[\phi_i \in \mathfrak{C}(X_i)\bigr]_{i \in \rmO}}\mathbb{E}_{
    X_\rmO \sim P_d ,
    \mathrm{PA}_{X_\rmO} \sim \phi(X_\rmO)} 
    \bigl[ c \bigl(X_\rmO, \psi_{\theta}(\mathrm{PA}_{X_\rmO}) \bigr) \bigr],
\end{align}
where $\mathrm{PA}_{X_\rmO} := \big[[X_{ij}]_{j \in \mathrm{PA}_{X_i}}\big]_{i \in \rmO}$.
\end{theorem}
\paragraph{Proof.} See Appendix \ref{sup:proof}.

By Theorem \ref{theorem:1}, the estimation of OT cost is reduced to finding the optimal conditional $\phi(\mathrm{PA}_{X_i} \vert X_i)$ for every observed node $X_i$ such that the ``backward" marginal $P_{\phi}(\mathrm{PA}_{X_i})$ is identical to the the ``forward" marginal $P_{\theta}(\mathrm{PA}_{X_i})$. While Theorem \ref{theorem:1} set ups a tractable form for our optimization solution, our OP is constrained, where every backward function $\phi_i$ must satisfy its push-forward condition defined by $\mathfrak{C}$. In the above example, the backward maps $\phi_i$ and $\phi_3$ must be constructed such that $\phi_1\#P_{d}(X_1) = P_{\theta}(X_2, X_4, U_1)$ and $\phi_3\#P_{d}(X_3) = P_{\theta}(X_4, U_3)$. We propose to relax the constraints by adding a penalty to the  objective (\ref{eq:primalobj}). 

The \textbf{final optimization objective} is therefore given as
\begin{align}\label{eq:finalobj}
& \inf_{\theta} \ \inf_{\phi} \mathbb{E}_{
    X_\rmO \sim P_d,
    \mathrm{PA}_{X_\rmO} \sim \phi(X_\rmO)
    } 
    \bigl[ c \bigl(X_\rmO, \psi_{\theta}(\mathrm{PA}_{X_\rmO})\bigr)\bigr] \nonumber \\ 
    & \quad \quad \quad \quad + \eta \ D\big( P_{\phi}, P_{\theta} \big) 
\end{align}
where $D$ is any arbitrary divergence measure and $\eta > 0$ is a trade-off hyper-parameter. $D \big( P_{\phi}, P_{\theta} \big)$ is a short-hand for divergence between all pairs of backward and forward marginals over the parents of the observed nodes. 

\paragraph{Remark.}\label{sect:ae_relation} Eq. (\ref{eq:finalobj}) renders an optimization-based approach in which we leverage reparameterization and amortization \citep{amos2022tutorial} for solving it efficiently via stochastic gradient descent. This theoretical result provides our OTP-DAG with two interesting properties: (1) all model parameters are optimized simultaneously within a single framework whether the variables are continuous or discrete, and (2) the computational process can be automated without the need for analytic lower bounds (as in EM and traditional VI), specific graphical structures (as in mean-field VI), or priors over variational distributions on latent variables (as in hierarchical VI). The flexibility our method exhibits is akin to auto-encoding models, and OTP-DAG in fact serves as an extension of WAE for learning general directed graphical models. Our formulation thus inherits a desirable characteristic from WAE, which specifically helps mitigate the posterior collapse issue notoriously occurring to VAE. Appendix \ref{sup:ae_relation} explains this behavior in more detail. Particularly in Section \ref{sect:drepl}, we will empirically show that OTP-DAG effectively alleviates the codebook collapse issue in discrete representation learning. Algorithm \ref{alg:algo} provides the pseudo-code for OTP-DAG learning procedure.

\begin{algorithm}[ht!]
    \caption{OTP-DAG Algorithm}
    \label{alg:algo}
\begin{algorithmic}

\STATE \textbf{Input:} Directed graph $\rmG$ with observed nodes $\rmO$, noise distribution $P(U)$, regularization coefficient $\eta$, reconstruction cost function $c$, and divergence measure $D$.

\STATE \textbf{Output:} Point estimate $\theta$.

\STATE  Initialize a set of deterministic assignments $\psi_{\theta} = \{\psi_{\theta_i}\}_{i\in \rmO}$ where $X_i := \psi_{\theta_i}(\textrm{PA}_{X_i}, U_i)$ and $U_i \sim P(U)$;

\STATE Initialize the stochastic  maps $\phi = \{\phi_i(X_i)\}_{i \in \rmO}$;

\STATE \textbf{while} $(\phi, \theta)$ \textit{not converged} \textbf{do}    
 \STATE \ \ \ \  \textit{for} $i \in \rmO$,
  \begin{itemize}[noitemsep]
      \item Sample batch $X^{B}_{i} = \{ x^{1}_{i}, ..., x^{B}_{i} \}$; 
      \item Sample $\widetilde{\textrm{PA}}_{X^{B}_{i}}$ from $\phi_i(X^{B}_{i})$;
      \item Sampling $U_i$ from the prior $P(U)$;
      \item Evaluate $\widetilde{X}^{B}_{i} = \psi_{\theta_i}(\widetilde{\textrm{PA}}_{X^{B}_{i}}, U_i)$.
  \end{itemize}
  
  \STATE Update $\phi$ and $\theta$ alternately by descending
    $$\frac{1}{B} \sum_{b=1}^{B} \sum_{i\in \rmO} c \big(x_{i}^{b}, \widetilde{x}_{i}^{b} \big) + \eta \ D \big[ P_{\phi_i}(\textrm{PA}_{X_i^B}), P_{\theta}(\textrm{PA}_{X_i^B}) \big]$$
    
\STATE \textbf{end while}
\end{algorithmic}
\end{algorithm}

\section{Applications}
In this section, we illustrate the practical applications of the OTP-DAG algorithm. We consider various directed probabilistic models with different types of latent variables (continuous and discrete) and for different types of data (texts, images, and time series). In all tables, we report the average results over $5$ random initializations and the best/second-best ones are bold/underlined. $\uparrow$, $\downarrow$ indicate higher/lower performance is better, respectively.

\paragraph{Baselines.}
We compare OTP-DAG with two groups of parameter learning methods towards the two extremes: (1) EM and SVI where analytic derivation is required; (2) variational auto-encoding frameworks where black-box optimization is permissible. We leave the discussion of the formulation and technicalities in Appendix \ref{sup:exp}. 

\paragraph{Experimental setup.}  We begin with (1) Latent Dirichlet Allocation \cite{blei2003latent}, a popular task of topic modeling where traditional methods like EM or SVI can solve. We then consider learning a (2) Hidden Markov Model (HMM), which remains fairly challenging, where existing optimization/inference algorithms (e.g., Baum-Welch algorithm) are often too computationally costly to be used in practice. We conclude with a more challenging setting: (3) Discrete Representation Learning (Discrete RepL) that cannot simply be solved by EM or MAP (maximum a posteriori). It in fact invokes deep generative modeling via a pioneering development called Vector Quantization Variational Auto-Encoder \citep[VQ-VAE,][]{van2017neural}. We attempt to apply OTP-DAG for learning discrete representations by grounding it into a parameter learning problem. We note again that for standard (continuous) representation learning, OTP-DAG reduces to WAE \citep{tolstikhin2017wasserstein}, which readers can refer to for extensive empirical evidence. Identifiability of the parameters in latent variable models is of critical concern. In the task of recovering the true parameters, we experiment with the LDA setting and Poisson HMM where the parameters are identifiable up to permutations \citep{teicher1960mixture,wang2019convergence}, and we resolve the ambiguity by sorting out the estimations. 

Figure \ref{fig:experiment} illustrates the empirical DAG structures of $3$ applications. Unlike the standard visualization where the parameters are considered hidden nodes, our graph separates model parameters from latent variables and only illustrates random variables and their dependencies (except the special setting of discrete representation learning). We also omit the exogenous variables associated with the hidden nodes for visibility, since only those acting on the observed nodes are relevant for computation. There is also a noticeable difference between Figures \ref{fig:experiment} and \ref{fig:main}: the empirical version does not require learning the backward maps for the exogenous variables. It is observed across our experiments that sampling the noise from an appropriate prior distribution suffices to yield accurate estimation, which is in fact beneficial in that the training time can be greatly reduced.

\paragraph{Remark.} In the following, we show that in the simulated settings where the models are well-specified, OTP-DAG performs equally well as the baseline methods, while exhibits superior efficiency over EM on such a complex graph as HMM. Furthermore, OTP-DAG is shown to achieve better performance on real-world downstream tasks, which substantiates the robustness of the minimum Wasserstein estimators in practical settings. Finally, throughout the experiments, we also aim to demonstrate the versatility of OTP-DAG where our method can be harnessed for a wide range of purposes in a single learning procedure.

\begin{figure*}[ht!]
     \centering
     \begin{subfigure}
         \centering
         \includegraphics[width=0.25\linewidth]{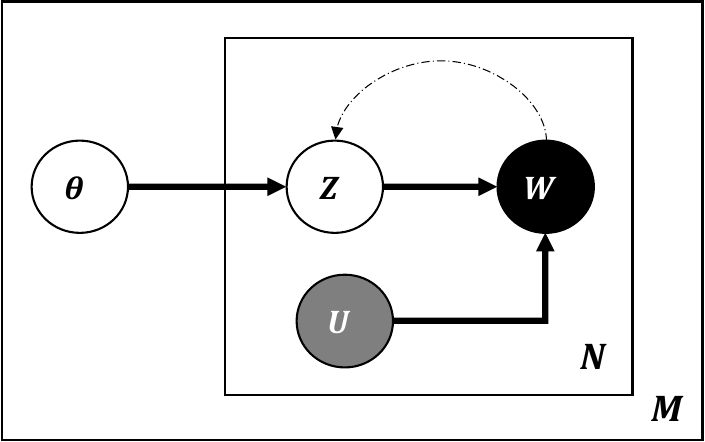}
         % \caption{LDA}
         \label{fig:plsi}
     \end{subfigure}~
     \hspace{2mm}
     \begin{subfigure}
         \centering
         \includegraphics[width=0.25\linewidth]{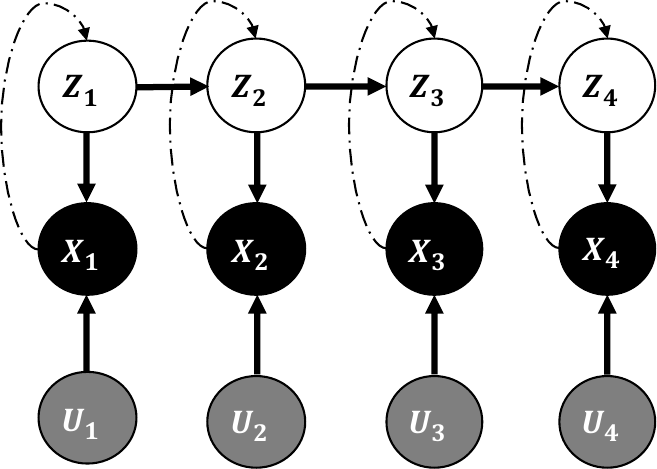}
         % \caption{HMM}
         \label{fig:hmm}
     \end{subfigure}~
     \hspace{2mm}
     \begin{subfigure}
         \centering
         \includegraphics[width=0.25\linewidth]{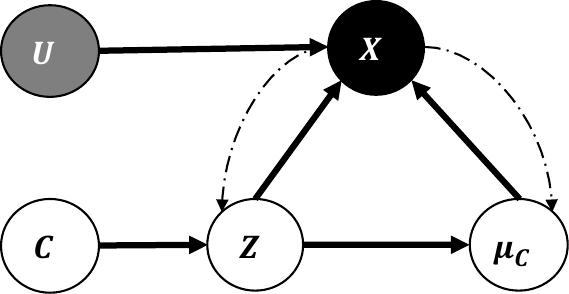}
         % \caption{Discrete RepL}
         \label{fig:vqwae}
     \end{subfigure}
      \caption{Empirical structures of \textbf{(left)} latent Dirichlet allocation model (in plate notation), \textbf{(middle)} standard hidden Markov model, and \textbf{(right)} discrete representation learning. 
     }
\label{fig:experiment}   
\end{figure*}

\subsection{Latent Dirichlet Allocation}
Let us consider a corpus $\gD$ of $M$ independent documents where each document is a sequence of $N$ words denoted by $W_{1:N} = (W_1, W_2, \cdots, W_N)$. Documents are represented as random mixtures over $K$ latent topics, each of which is characterized by a distribution over words. Let $V$ be the size of a vocabulary indexed by $\{1, \cdots, V\}$. Latent Dirichlet Allocation (LDA) \citep{blei2003latent} dictates the following generative process for every document in the corpus:

\begin{enumerate}[noitemsep]
    \item Sample $\theta \sim \textrm{Dir}(\alpha)$ with $\alpha < 1$,
    \item Sample $\gamma_k \sim \textrm{Dir}(\beta)$ where $k \in \{1, \cdots, K\}$,
    \item For each of the word positions $n \in \{1, \cdots, N\}$,
    \begin{itemize}[noitemsep]
        \item Sample a topic $z_n \sim \textrm{Multi-nominal}(\theta)$,
        \item Sample a word $w_n \sim \textrm{Multi-nominal}(\gamma_{kn})$,
    \end{itemize}
\end{enumerate}

where $\textrm{Dir}(.)$ is a Dirichlet distribution. $\theta$ is a $K-$dimensional vector that lies in the $(K-1)-$simplex and $\gamma_k$ is a $V-$dimensional vector represents the word distribution corresponding to topic $k$. In the standard model, $\alpha, \beta, K$ are hyper-parameters and $\theta,\gamma$ are learnable parameters.  Throughout the experiments, the number of topics $K$ is assumed known and fixed. 

\paragraph{Parameter estimation.} To test whether OTP-DAG can recover the true parameters, we generate synthetic data in the setting: the word probabilities are parameterized by a $K \times V$ matrix $\gamma$ where $\gamma_{kn} := P(W_n = 1 \vert Z_n =1)$; $\gamma$ is now a fixed quantity to be estimated. We set $\alpha = 1/K$ uniformly and generate small datasets for different number of topics $K$ and sample size $N$. Following \cite{griffiths2004finding}, for every topic $k$, the word distribution $\gamma_k$ can be represented as a square grid where each cell, corresponding to a word, is assigned an integer value of either $0$ and $1$, indicating whether a certain word is allocated to the $k^{th}$ topic or not. As a result, each topic is associated with a specific pattern. For simplicity, we represent topics using horizontal or vertical patterns (See Figure \ref{fig:topic10}). According to the above generative model, we sample data w.r.t $3$ sets of configuration triplets $\{K,M,N\}$. We compare OTP-DAG with Batch EM and SVI and Prod LDA - a variational auto-encoding topic model \citep{srivastava2017autoencoding}. 

\begin{table*}[h!]
\caption{Fidelity of estimates of the topic-word distribution $\gamma$ across $3$ settings. Fidelity is measured by KL divergence, Hellinger (HL) \citep{hellinger1909neue} and Wasserstein distance with the true $\gamma$.}
\vskip 0.15in
\centering
\resizebox{0.85\linewidth}{!}{
\begin{tabular}{l l l l c c c r}
\toprule
Metric $\downarrow$ & $K$ & $M$ & $N$ & \textbf{OTP-DAG} (Ours) & \textbf{Batch EM} & \textbf{SVI} & \textbf{Prod LDA} \\
\midrule
HL  & 10  & 1,000  & 100 & \textbf{2.327 $\pm$ 0.009} & 2.807 $\pm$ 0.189          & 2.712 $\pm$ 0.087          & \underline{2.353 $\pm$ 0.012} \\
KL & 10  & 1,000  & 100 & 1.701 $\pm$ 0.005          & 1.634 $\pm$ 0.022 & \textbf{1.602 $\pm$ 0.014} & \underline{1.627 $\pm$ 0.027}          \\
WS & 10  & 1,000  & 100 & \textbf{0.027 $\pm$ 0.004} & 0.058 $\pm$ 0.000          & 0.059 $\pm$ 0.000          & \underline{0.052 $\pm$ 0.001} \\

\midrule
HL & 20  & 5,000  & 200 & \underline{3.800 $\pm$ 0.058} & 4.256 $\pm$ 0.084          & 4.259 $\pm$ 0.096          & \textbf{3.700 $\pm$ 0.012} \\
KL & 20  & 5,000  & 200 & 2.652 $\pm$ 0.080          & \textbf{2.304 $\pm$ 0.004} & \underline{2.305 $\pm$ 0.003} & 2.316 $\pm$ 0.026          \\
WS & 20  & 5,000  & 200 & \textbf{0.010 $\pm$ 0.001} & 0.022 $\pm$ 0.000          & 0.022 $\pm$ 0.001          & \underline{0.018 $\pm$ 0.000}          \\

\midrule
HL & 30  & 10,000 & 300 & \underline{4.740 $\pm$ 0.029} & 5.262 $\pm$ 0.077          & 5.245 $\pm$ 0.035          & \textbf{4.723 $\pm$ 0.017} \\
KL & 30  & 10,000 & 300 & 2.959 $\pm$ 0.015          & \textbf{2.708 $\pm$ 0.002} & \underline{2.709 $\pm$ 0.001} & 2.746 $\pm$ 0.034          \\
WS & 30  & 10,000 & 300 & \textbf{0.005 $\pm$ 0.001} & 0.012 $\pm$ 0.000          & 0.012 $\pm$ 0.000          & \underline{0.009 $\pm$ 0.000}         
\\
\bottomrule
\end{tabular}
}
\label{tab:lda_param}
\end{table*}

Table \ref{tab:lda_param} reports the fidelity of the estimation of $\gamma$. OTP-DAG consistently achieves high-quality estimates by both Hellinger and Wasserstein distances. It is not surprising that the baselines are superior by the KL metric, as it is what they implicitly minimize. While it is inconclusive from the numerical estimations, the qualitative results complete the story. Figure \ref{fig:topic10} illustrates the distributions of individual words to the topics from each method after 300 training epochs. OTP-DAG successfully recovers the true patterns and as well as EM and SVI, while Prod LDA mis-detects several patterns, despite the competitive numerical results. More qualitative examples for the other settings are presented in Figures \ref{fig:topic20} and \ref{fig:topic30} where OTP-DAG is shown to recover almost all true patterns.

\begin{figure}
    \centering    \includegraphics[width=\linewidth]{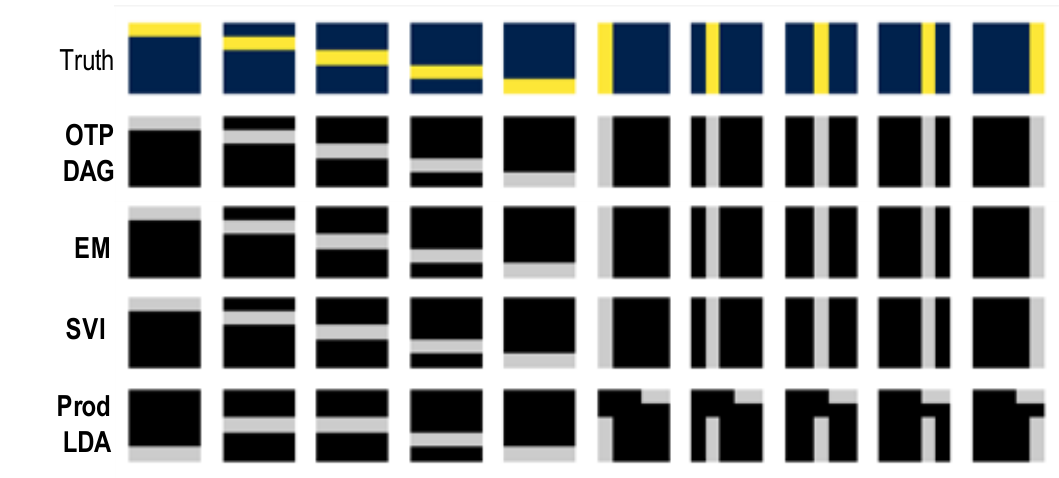}
    \caption{Topic-word distributions inferred by each method from the 1st set of synthetic data after 300 training epochs.}
    \label{fig:topic10}
\end{figure}

\paragraph{Topic Inference.} We now demonstrate the effectiveness of OTP-DAG on downstream applications\footnote{\url{https://github.com/MIND-Lab/OCTIS}. We use OCTIS to standardize evaluation for all models on the topic inference task. Note that the computation of topic coherence score in OCTIS is different than in \citet{srivastava2017autoencoding}.}. We here use OTP-DAG to infer the topics of $3$ real-world datasets: 20 News Group\footnote{\url{http://qwone.com/~jason/20Newsgroups/}.}, BBC News \citep{greene06icml} and DBLP\footnote{\url{https://github.com/shiruipan/TriDNR/}.}. We revert to the original generative process where the topic-word distributions follows a Dirichlet distribution parameterized by the concentration parameters $\beta$, instead of having $\gamma$ as a fixed quantity. $\beta$ is now initialized as a matrix of real values $\big( \beta \in \mathbb{R}^{K \times V}\big)$ representing the log concentration values.  

For every topic $k$, we select top $10$ most related words according to $\gamma_k$ to represent it. Table \ref{tab:lda_quanti} reports the quality of the inferred topics, which is evaluated via the diversity and coherence of the selected words. Diversity refers to the proportion of unique words, whereas Coherence is measured with normalized pointwise mutual information \citep{aletras2013evaluating}, reflecting the extent to which the words in a topic are associated with a common theme. There exists a trade-off between Diversity and Coherence: words that are excessively diverse greatly reduce coherence, while a set of many duplicated words yields higher coherence yet harms diversity. A well-performing topic model would strike a good balance between these metrics~\cite{zhao2021topic}. If we consider two metrics comprehensively, our method consistently achieves better performance across different settings. Qualitative results of the inferred topics can be found in Table \ref{tab:lda_quali}.

\begin{table}[h!]
    \centering
\caption{Coherence and Diversity of the inferred topics for the $3$ real-world datasets ($K = 10$).}
\vskip 0.15in
\resizebox{\linewidth}{!}{
\begin{tabular}{l c c c r}
\toprule
Metric (\%) $\uparrow$ & \textbf{OTP-DAG} (Ours)   & \textbf{Batch EM} & \textbf{SVI}              & \textbf{Prod LDA} \\
\midrule
\multicolumn{5}{c}{20 News Group} \\
\midrule
Coherence                & \textbf{10.45 $\pm$ 0.56} & \underline{6.71 $\pm$ 0.16} & 5.90 $\pm$ 0.51           & 4.78 $\pm$ 2.64           \\
Diversity                & \underline{92.00 $\pm$ 2.65} & 72.33 $\pm$ 1.15      & 85.33 $\pm$ 5.51          & \textbf{92.67 $\pm$ 4.51}    \\
\midrule
\multicolumn{5}{c}{BBC News} \\
\midrule
Coherence                & \textbf{9.12 $\pm$ 0.81}        & \underline{8.67 $\pm$ 0.62} & 7.84 $\pm$ 0.49           & 2.17 $\pm$ 2.36                 \\
Diversity                & \underline{87.67 $\pm$ 2.65}                & 86.00 $\pm$ 1.00      & \textbf{92.33 $\pm$ 2.31} & \underline{87.67 $\pm$ 3.79}          \\
\midrule
\multicolumn{5}{c}{DBLP} \\
\midrule
Coherence                & \textbf{7.66 $\pm$ 0.44}  & \underline{4.52 $\pm$ 0.53} & 1.47 $\pm$ 0.39           & 2.91 $\pm$ 1.70           \\
Diversity                & \underline{97.33 $\pm$ 1.53}    & 81.33 $\pm$ 1.15      & 92.67 $\pm$ 2.52          & \textbf{98.67 $\pm$ 1.53} 
\\
\bottomrule
\end{tabular}
}
\label{tab:lda_quanti}
\end{table}

\subsection{Hidden Markov Models}\label{sect:hmm}
This application deals with time-series data following a \textbf{Poisson hidden Markov model}. Given a time series of $T$ steps, the task is to segment the data stream into 4 different states, each of which follows a Poisson distribution with rate $\lambda_k$ sampled from a Uniform hyper-prior. The distributions and the observation at each step $t$ are given as 
\begin{equation*}
    P(X_t \vert Z_t = k) = \mathrm{Poi}(X_t \vert \lambda_k), \quad \text{for} \ k=1, \cdots, 4,
\end{equation*}
where $\lambda_1 \sim U(10,20)$, $\lambda_2 \sim U(30,40)$, $\lambda_3 \sim U(50,60)$ and $\lambda_4 \sim U(80,90)$.
We further impose a uniform prior over the initial state. The Markov chain stays in the current state with probability $p$ and otherwise transitions to one of the other three states uniformly at random. The transition distribution is given as 
\begin{align*}
    & z_1 \sim \mathrm{Cat} \bigg( \bigg\{ \frac{1}{4}, \frac{1}{4}, \frac{1}{4}, \frac{1}{4} \bigg\} \bigg), \\
    & z_t \vert z_{t-1} \sim \mathrm{Cat} \bigg( 
    \left\{\begin{array}{lr}
        \pi & \text{if } Z_t = Z_{t-1}\\
        \frac{1-\pi}{4-1} & \text{otherwise } 
        \end{array}
        \right\}\bigg)
\end{align*}

We randomly generate $200$ datasets of $50,000$ observations each. For each dataset, we train the models for $50$ epochs with learning rate of $0.05$ at $5$ different initializations. We would like to learn the concentration parameters $\lambda_{1:4}$ through which segmentation can be realized, assuming that the number of states is known. The other experimental configuration is reported in Appendix \ref{sup:hmm_exp}. 

Table \ref{tab:poisonhmm} reports mean error of the estimates of the parameters along with runtime of OTP-DAG and EM. As the absolute values can be misleading, we report the errors in relative terms, where we apply min-max normalization to scale the $\lambda$ values to $[0,1]$.  Figure \ref{fig:estimate_dist} additionally visualizes the distribution of the estimations from OTP-DAG and EM to show the alignment with the generative uniform distributions.

\begin{table}[hbt!]
\caption{Estimates of the concentration parameters $\lambda_{1:4}$ of the Poisson HMM, measured by mean absolute error with the true values.}
\vskip 0.15in
\centering
     \resizebox{\linewidth}{!}{
    \begin{tabular}{l|c|c}
    \toprule
     Method & \textbf{OTP-DAG} (Ours) & \textbf{EM}\\
    \midrule
     $\lambda_1$  & 0.040 $\pm$ 0.129 & \textbf{0.022 $\pm$ 0.042} \\ 

     $\lambda_2$  & \textbf{0.079 $\pm$ 0.088} & 0.088 $\pm$ 0.105 \\ 

     $\lambda_3$  & \textbf{0.148 $\pm$ 0.119} & 0.166 $\pm$ 0.171 \\ 

     $\lambda_4$  & \textbf{0.084 $\pm$ 0.099} & 0.101 $\pm$ 0.008 \\ 

     Runtime ($50$ steps) & $\approx \mathbf{3}$ \textbf{mins} & $\approx 20$ mins \\
    \bottomrule
    \end{tabular}
     }    
    \label{tab:poisonhmm}
\end{table}

% The true transition probabilities are generally unknown. The value $p$ is treated as a hyper-parameter and we fit HMM with $6$ choices of $p$. Table \ref{tab:cp} demonstrates the quality of our estimates, in comparison with MAP estimates. Our estimation approaches the ground-truth values comparably to MAP. We note that the MAP solution requires the analytical marginal likelihood of the model, which is not necessary for our method. 
% Figure \ref{fig:cp} reports the most probable state for each observation, inferred from our backward distribution $\phi(X_{1:T})$. It can be seen that the partition overall aligns with the true generative process of the data.

% By observing the data, one can assume $p$ should be relatively high, $0.75 - 0.95$ seems most reasonable. This explains why the MAP estimation at $p = 0.05$ is terrible. Meanwhile, for our OTP-DAG, the effect of $p$ is controlled by the trade-off coefficient $\eta$. We here fix $\eta = 0.1$. Since the effect is fairly minor, OTP-DAG estimates across $p$ are less variant. Table \ref{tab:cp_ablation} additionally analyzes the model performance when $\eta$ varies.  

\subsection{Learning Discrete Representations}\label{sect:drepl}
\begin{table*}[h!]
    \centering
    \caption{Quality of the image reconstructions from the vector quantized models ($K = 512$).}
    \vskip 0.15in
\resizebox{0.85\linewidth}{!}{
    \begin{tabular}{lrcrrrrr}
        \toprule 
Dataset & Method & Latent Size & SSIM $\uparrow$ & PSNR $\uparrow$ & LPIPS $\downarrow$ & rFID $\downarrow$ & Perplexity $\uparrow$
\tabularnewline 
\midrule 
CIFAR10 & \textbf{VQ-VAE}  & 8 $\times$ 8 & 0.70 & 23.14 & 0.35 & 77.3 & 69.8  \tabularnewline
 & \textbf{OTP-DAG} (Ours) & 8 $\times$ 8 & \textbf{0.80} & \textbf{25.40} & \textbf{0.23} & \textbf{56.5} & \textbf{498.6} \tabularnewline
 \midrule 
         
MNIST & \textbf{VQ-VAE}  & 8 $\times$ 8 & \textbf{0.98} & 33.37 & 0.02  & 4.8 & 47.2 \tabularnewline
& \textbf{OTP-DAG} (Ours)  & 8 $\times$ 8 & \textbf{0.98} & \textbf{33.62} & \textbf{0.01} & \textbf{3.3} & \textbf{474.6}
\tabularnewline
 \midrule 
SVHN & \textbf{VQ-VAE} & 8 $\times$ 8 & 0.88 & 26.94 & 0.17 & 38.5 &  114.6 \tabularnewline
& \textbf{OTP-DAG} (Ours) & 8 $\times$ 8 & \textbf{0.94} & \textbf{32.56} & \textbf{0.08} & \textbf{25.2} & \textbf{462.8} \tabularnewline
\midrule
CELEBA & \textbf{VQ-VAE} & 16 $\times$ 16 & 0.82 & 27.48 & 0.19 &  19.4 & 48.9 \tabularnewline
& \textbf{OTP-DAG} (Ours)  & 16 $\times$ 16  & \textbf{0.88} & \textbf{29.77} & \textbf{0.11} & \textbf{13.1} & \textbf{487.5} \tabularnewline
\bottomrule
\end{tabular}
}
\label{tab:drl}
\end{table*}

% Many types of data exist in discrete symbols e.g., words in texts, or pixels in images. This motivates the need to explore the 

Learning latent discrete representations of data is an important problem, which can be useful for planning and symbolic reasoning tasks. Viewing discrete representation learning as a parameter learning problem, we endow it with a probabilistic generative process as illustrated in Figure \ref{fig:vqwae}. 
The problem deals with a latent space $\gC \in \mathbb{R}^{K \times D}$ composed of $K$ discrete latent sub-spaces of $D$ dimensionality. The probability a data point belongs to a discrete sub-space $c \in \{1, \cdots, K\} $ follows a $K-$way categorical distribution $\pi = [\pi_1, \cdots, \pi_K]$. In the language of VQ-VAE, each $c$ is referred to as a \textit{codeword} and the set of codewords is called a \textit{codebook}. Let $Z \in \mathbb{R}^D$ denote the latent variable in a sub-space. On each sub-space, we impose a Gaussian distribution parameterized by $\mu_c, \Sigma_c$ where $\Sigma_c$ is diagonal. 
The generative process is as follows:
\begin{enumerate} [noitemsep]
    \item Sample $c \sim \textrm{Cat}(\pi)$ and $z \sim \gN(\mu_c, \Sigma_c)$
    \item Quantize $\mu_c = Q(z)$,
    \item Generate $x = \psi_{\theta}(z, \mu_c)$,
\end{enumerate}
where $\psi$ is a highly non-convex function with unknown parameters $\theta$ and often parameterized by a deep neural network. $Q$ refers to the quantization of $z$ to $\mu_c$ defined as $\mu_c = Q(z)$ where $c = \mathrm{arg min}_{c} \ d_z \big(z; \mu_c \big)$ and $d_z = \sqrt{(z - \mu_c)^{T} \Sigma_{c}^{-1} (z - \mu_c)}$ is the Mahalanobis distance. The goal is to learn the parameter set $\{\pi, \mu, \Sigma, \theta\}$ with $\mu = [\mu_k]_{k=1}^K, \Sigma = [\Sigma_k]_{k=1}^K$ such that the learned representation captures the key properties of the data. Following VQ-VAE, our practical implementation considers $Z$ as an $M-$component latent embedding. 

We experiment with images in this application and compare OTP-DAG with VQ-VAE on CIFAR10\footnote{\url{https://www.cs.toronto.edu/~kriz/cifar.html}.}, MNIST \citep{lecun1998gradient}, SVHN \citep{netzer2011reading} and CELEBA datasets \cite{liu2015deep}. Since the true parameters are unknown, we assess how well the latent space characterizes the input data through the quality of the reconstruction of the original images. Table \ref{tab:drl} reports our superior performance in preserving high-quality information of the input images. VQ-VAE suffers from poorer performance mainly due to \textit{codebook collapse} \citep{yu2021vector} where most of latent vectors are quantized to limited discrete codewords. Meanwhile, our framework allows to control the number of latent representations, ensuring all codewords are utilized. In Appendix \ref{sup:repl_exp}, we detail the formulation of our method and provide qualitative examples. We also showcase therein our competitive performance against a recent advance called SQ-VAE \citep{takida2022sq} without introducing any additional complexity.

\section{Discussion and Conclusion}\label{sect:discussion}
\paragraph{Discussion.} The key message across our experiments is that OTP-DAG is a scalable and versatile framework readily applicable to learning any directed graphs with latent variables. Similar to amortized VI, on one hand, our method employs amortized optimization and assumes one can sample from the priors or more generally, the model marginals over latent parents. OTP-DAG requires continuous relaxation through reparameterization of the underlying model distribution to ensure the gradients can be back-propagated effectively. Note that this specification is not unique to OTP-DAG: VAE also relies on the reparameterization trick to compute the gradients w.r.t the variational parameters. For discrete distributions and for non-reparameterizable continuous ones (e.g., Gamma distribution),  the reparameterization trick cannot be easily applied. To this end, a proposal on \textit{Generalized Reparameterization Gradient} \citep{ruiz2016generalized} can be a viable solution. 

On the other hand, different from VI, our global OT cost minimization is achieved by characterizing local densities through the backward maps from the observed nodes to their parents.  This localization strategy makes it easier to find a good approximation compared to VI, where the variational distribution is defined over all hidden variables and should ideally characterize the entire global dependencies in the graph.  To model the backward distributions, we utilize the expressive power of deep neural networks. Based on the universal approximation theorem \citep{hornik1989multilayer}, the gap between the backward and forward marginals can be assumed to be smaller than an arbitrary constant $\epsilon$ given enough data, network complexity, and training time. 

\paragraph{Limitations.} Theoretically, our algorithm can scale up to more complex graphs since we make no assumptions about the graphical structure. Our algorithm remains applicable to different graph sizes, where the computation is localized to the dependencies between an observed node and its (direct) parents. However, larger graphs indeed induce more operations where ancestral sampling to evaluate the model marginals over the related parent nodes can be computational expensive. The increased complexity has little impact on our evaluation of reconstruction loss,
which only involves forward operations. However, an immediate trade-off arises as it introduces additional computational
complexity to the optimization of the divergence measures. Fortunately our framework allows $D$ to be chosen flexibly depending on applications. Here we
analyze some promising candidates. A popular option is to choose $D$ as the Jensen–Shannon divergence and estimate it with GAN-based training \citep{goodfellow2020generative}. However, this choice is clearly inappropriate as it necessitates training additional discriminators, not to mention that GANs are known for their instability. Wasserstein (WS) distance and maximum mean discrepancy (MMD) are two other candidates that can be estimated with empirical samples. While the exact computation of WS has high complexity in high-dimensional space, efficient and high-quality approximations exist such as Sinkhorn divergences \citep{cuturi2013lightspeed} or Sliced WS distance \citep{bonneel2015sliced}. MMD is also practically viable, whose sample complexity does not depend on the dimension. Furthermore, there is a kernel-based closed
form to compute an unbiased estimator with reasonable choices of the kernel \citep{gretton2012kernel}.

\paragraph{Future research.}
The proposed algorithm lays the cornerstone for an exciting paradigm shift in the realm of graphical learning and inference. Looking ahead, this fresh perspective unlocks a wealth of promising avenues for future application of OTP-DAG to large-scale inference problems or other learning tasks such as for undirected graphical models, or structural learning where edge existence and directionality can be parameterized as part of the model parameters.

%%%%%%%%%%%%%%%%%%%%%%%%%%%%%%%%%%%%%%%%%%%%%%%%%%%%%%%%

\section*{Acknowledgments}
Trung Le and Dinh Phung were supported by ARC DP23 grant DP230101176 and by the Air Force Office of Scientific Research under award number FA2386-23-1-4044. This does not imply endorsement by the funding agency of the research findings or conclusions. Any errors or misinterpretations in this paper are the sole responsibility of the authors.

\section*{Impact Statement}
This work introduces an application of machine learning to effectively address a class of statistical estimation problems in a scalable manner. While we are currently unaware of any potential negative societal impacts of our work, machine learning frequently yields unintended consequences in various domains, necessitating thorough consideration of societal advantages and drawbacks when implementing the proposed method in real-world scenarios.

\bibliographystyle{icml2024}
\bibliography{ref}

\begin{thebibliography}{94}
\providecommand{\natexlab}[1]{#1}
\providecommand{\url}[1]{\texttt{#1}}
\expandafter\ifx\csname urlstyle\endcsname\relax
  \providecommand{\doi}[1]{doi: #1}\else
  \providecommand{\doi}{doi: \begingroup \urlstyle{rm}\Url}\fi

\bibitem[Adler \& Lunz(2018)Adler and Lunz]{adler2018banach}
Adler, J. and Lunz, S.
\newblock Banach wasserstein gan.
\newblock \emph{Advances in neural information processing systems}, 31, 2018.

\bibitem[Aletras \& Stevenson(2013)Aletras and
  Stevenson]{aletras2013evaluating}
Aletras, N. and Stevenson, M.
\newblock Evaluating topic coherence using distributional semantics.
\newblock In \emph{Proceedings of the 10th international conference on
  computational semantics (IWCS 2013)--Long Papers}, pp.\  13--22, 2013.

\bibitem[Ambrogioni et~al.(2018)Ambrogioni, G{\"{u}}{\c{c}}l{\"{u}},
  G{\"{u}}{\c{c}}l{\"{u}}t{\"{u}}rk, Hinne, {Van Gerven}, and
  Maris]{Ambrogioni2018}
Ambrogioni, L., G{\"{u}}{\c{c}}l{\"{u}}, U., G{\"{u}}{\c{c}}l{\"{u}}t{\"{u}}rk,
  Y., Hinne, M., {Van Gerven}, M.~A., and Maris, E.
\newblock {Wasserstein variational inference}.
\newblock \emph{Advances in Neural Information Processing Systems},
  2018-December\penalty0 (NeurIPS):\penalty0 2473--2482, 2018.
\newblock ISSN 10495258.

\bibitem[Ambrogioni et~al.(2021)Ambrogioni, Lin, Fertig, Vikram, Hinne, Moore,
  and van Gerven]{pmlr-v130-ambrogioni21a}
Ambrogioni, L., Lin, K., Fertig, E., Vikram, S., Hinne, M., Moore, D., and van
  Gerven, M.
\newblock Automatic structured variational inference.
\newblock In Banerjee, A. and Fukumizu, K. (eds.), \emph{Proceedings of The
  24th International Conference on Artificial Intelligence and Statistics},
  volume 130 of \emph{Proceedings of Machine Learning Research}, pp.\
  676--684. PMLR, 13--15 Apr 2021.
\newblock URL \url{https://proceedings.mlr.press/v130/ambrogioni21a.html}.

\bibitem[Amos(2022)]{amos2022tutorial}
Amos, B.
\newblock Tutorial on amortized optimization for learning to optimize over
  continuous domains.
\newblock \emph{arXiv preprint arXiv:2202.00665}, 2022.

\bibitem[Anandkumar et~al.(2012)Anandkumar, Hsu, and
  Kakade]{anandkumar2012method}
Anandkumar, A., Hsu, D., and Kakade, S.~M.
\newblock A method of moments for mixture models and hidden markov models.
\newblock In \emph{Conference on Learning Theory}, pp.\  33--1. JMLR Workshop
  and Conference Proceedings, 2012.

\bibitem[Anandkumar et~al.(2014)Anandkumar, Ge, Hsu, Kakade, and
  Telgarsky]{anandkumar2014tensor}
Anandkumar, A., Ge, R., Hsu, D., Kakade, S.~M., and Telgarsky, M.
\newblock Tensor decompositions for learning latent variable models.
\newblock \emph{Journal of machine learning research}, 15:\penalty0 2773--2832,
  2014.

\bibitem[Arjovsky et~al.(2017)Arjovsky, Chintala, and
  Bottou]{arjovsky2017wasserstein}
Arjovsky, M., Chintala, S., and Bottou, L.
\newblock Wasserstein generative adversarial networks.
\newblock In \emph{International conference on machine learning}, pp.\
  214--223. PMLR, 2017.

\bibitem[Bassetti et~al.(2006)Bassetti, Bodini, and
  Regazzini]{bassetti2006minimum}
Bassetti, F., Bodini, A., and Regazzini, E.
\newblock On minimum kantorovich distance estimators.
\newblock \emph{Statistics \& probability letters}, 76\penalty0 (12):\penalty0
  1298--1302, 2006.

\bibitem[Beal \& Ghahramani(2006)Beal and Ghahramani]{beal2006variational}
Beal, M.~J. and Ghahramani, Z.
\newblock Variational bayesian learning of directed graphical models with
  hidden variables.
\newblock 2006.

\bibitem[Bernton et~al.(2019)Bernton, Jacob, Gerber, and
  Robert]{bernton2019parameter}
Bernton, E., Jacob, P.~E., Gerber, M., and Robert, C.~P.
\newblock On parameter estimation with the wasserstein distance.
\newblock \emph{Information and Inference: A Journal of the IMA}, 8\penalty0
  (4):\penalty0 657--676, 2019.

\bibitem[Bishop \& Nasrabadi(2006)Bishop and Nasrabadi]{bishop2006pattern}
Bishop, C.~M. and Nasrabadi, N.~M.
\newblock \emph{Pattern recognition and machine learning}, volume~4.
\newblock Springer, 2006.

\bibitem[Blei et~al.(2003)Blei, Ng, and Jordan]{blei2003latent}
Blei, D.~M., Ng, A.~Y., and Jordan, M.~I.
\newblock Latent dirichlet allocation.
\newblock \emph{Journal of machine Learning research}, 3\penalty0
  (Jan):\penalty0 993--1022, 2003.

\bibitem[Bonneel et~al.(2015)Bonneel, Rabin, Peyr{\'e}, and
  Pfister]{bonneel2015sliced}
Bonneel, N., Rabin, J., Peyr{\'e}, G., and Pfister, H.
\newblock Sliced and radon wasserstein barycenters of measures.
\newblock \emph{Journal of Mathematical Imaging and Vision}, 51:\penalty0
  22--45, 2015.

\bibitem[Bui et~al.(2021)Bui, Le, Tran, Zhao, and Phung]{bui2021unified}
Bui, A.~T., Le, T., Tran, Q.~H., Zhao, H., and Phung, D.
\newblock A unified wasserstein distributional robustness framework for
  adversarial training.
\newblock In \emph{International Conference on Learning Representations}, 2021.

\bibitem[Capp{\'e} \& Moulines(2009)Capp{\'e} and Moulines]{cappe2009line}
Capp{\'e}, O. and Moulines, E.
\newblock On-line expectation--maximization algorithm for latent data models.
\newblock \emph{Journal of the Royal Statistical Society: Series B (Statistical
  Methodology)}, 71\penalty0 (3):\penalty0 593--613, 2009.

\bibitem[Cuturi(2013)]{cuturi2013lightspeed}
Cuturi, M.
\newblock Lightspeed computation of optimal transportation distances.
\newblock \emph{Advances in Neural Information Processing Systems}, 26\penalty0
  (2):\penalty0 2292--2300, 2013.

\bibitem[Dai et~al.(2020)Dai, Wang, and Wipf]{dai2020usual}
Dai, B., Wang, Z., and Wipf, D.
\newblock The usual suspects? reassessing blame for vae posterior collapse.
\newblock In \emph{International conference on machine learning}, pp.\
  2313--2322. PMLR, 2020.

\bibitem[Delyon et~al.(1999)Delyon, Lavielle, and
  Moulines]{delyon1999convergence}
Delyon, B., Lavielle, M., and Moulines, E.
\newblock Convergence of a stochastic approximation version of the em
  algorithm.
\newblock \emph{Annals of statistics}, pp.\  94--128, 1999.

\bibitem[Dempster et~al.(1977)Dempster, Laird, and Rubin]{Dempster1977}
Dempster, A.~P., Laird, N.~M., and Rubin, D.~B.
\newblock { Maximum Likelihood from Incomplete Data Via the EM Algorithm }.
\newblock \emph{Journal of the Royal Statistical Society: Series B
  (Methodological)}, 39\penalty0 (1):\penalty0 1--22, 1977.
\newblock \doi{10.1111/j.2517-6161.1977.tb01600.x}.

\bibitem[Foti et~al.(2014)Foti, Xu, Laird, and Fox]{foti2014stochastic}
Foti, N., Xu, J., Laird, D., and Fox, E.
\newblock Stochastic variational inference for hidden markov models.
\newblock \emph{Advances in neural information processing systems}, 27, 2014.

\bibitem[Gao et~al.(2024)Gao, Zhao, Guo, and Zha]{gao2024distribution}
Gao, J., Zhao, H., Guo, D., and Zha, H.
\newblock Distribution alignment optimization through neural collapse for
  long-tailed classification.
\newblock In \emph{International Conference on Machine Learning}, 2024.

\bibitem[Geffner et~al.(2022)Geffner, Antoran, Foster, Gong, Ma, Kiciman,
  Sharma, Lamb, Kukla, Pawlowski, et~al.]{geffner2022deep}
Geffner, T., Antoran, J., Foster, A., Gong, W., Ma, C., Kiciman, E., Sharma,
  A., Lamb, A., Kukla, M., Pawlowski, N., et~al.
\newblock Deep end-to-end causal inference.
\newblock \emph{arXiv preprint arXiv:2202.02195}, 2022.

\bibitem[Gelfand \& Smith(1990)Gelfand and Smith]{gelfand1990sampling}
Gelfand, A.~E. and Smith, A.~F.
\newblock Sampling-based approaches to calculating marginal densities.
\newblock \emph{Journal of the American statistical association}, 85\penalty0
  (410):\penalty0 398--409, 1990.

\bibitem[Gilks et~al.(1995)Gilks, Richardson, and
  Spiegelhalter]{gilks1995markov}
Gilks, W.~R., Richardson, S., and Spiegelhalter, D.
\newblock \emph{Markov chain Monte Carlo in practice}.
\newblock CRC press, 1995.

\bibitem[Girshick(2015)]{girshick2015fast}
Girshick, R.
\newblock Fast r-cnn.
\newblock In \emph{Proceedings of the IEEE international conference on computer
  vision}, pp.\  1440--1448, 2015.

\bibitem[Goodfellow et~al.(2020)Goodfellow, Pouget-Abadie, Mirza, Xu,
  Warde-Farley, Ozair, Courville, and Bengio]{goodfellow2020generative}
Goodfellow, I., Pouget-Abadie, J., Mirza, M., Xu, B., Warde-Farley, D., Ozair,
  S., Courville, A., and Bengio, Y.
\newblock Generative adversarial networks.
\newblock \emph{Communications of the ACM}, 63\penalty0 (11):\penalty0
  139--144, 2020.

\bibitem[Greene \& Cunningham(2006)Greene and Cunningham]{greene06icml}
Greene, D. and Cunningham, P.
\newblock Practical solutions to the problem of diagonal dominance in kernel
  document clustering.
\newblock In \emph{Proc. 23rd International Conference on Machine learning
  (ICML'06)}, pp.\  377--384. ACM Press, 2006.

\bibitem[Gretton et~al.(2012)Gretton, Borgwardt, Rasch, Sch{\"o}lkopf, and
  Smola]{gretton2012kernel}
Gretton, A., Borgwardt, K.~M., Rasch, M.~J., Sch{\"o}lkopf, B., and Smola, A.
\newblock A kernel two-sample test.
\newblock \emph{The Journal of Machine Learning Research}, 13\penalty0
  (1):\penalty0 723--773, 2012.

\bibitem[Griffiths \& Steyvers(2004)Griffiths and
  Steyvers]{griffiths2004finding}
Griffiths, T.~L. and Steyvers, M.
\newblock Finding scientific topics.
\newblock \emph{Proceedings of the National academy of Sciences}, 101\penalty0
  (suppl\_1):\penalty0 5228--5235, 2004.

\bibitem[Hammersley(2013)]{hammersley2013monte}
Hammersley, J.
\newblock \emph{Monte carlo methods}.
\newblock Springer Science \& Business Media, 2013.

\bibitem[Hellinger(1909)]{hellinger1909neue}
Hellinger, E.
\newblock Neue begr{\"u}ndung der theorie quadratischer formen von
  unendlichvielen ver{\"a}nderlichen.
\newblock \emph{Journal f{\"u}r die reine und angewandte Mathematik},
  1909\penalty0 (136):\penalty0 210--271, 1909.

\bibitem[Hensman et~al.(2012)Hensman, Rattray, and Lawrence]{hensman2012fast}
Hensman, J., Rattray, M., and Lawrence, N.
\newblock Fast variational inference in the conjugate exponential family.
\newblock \emph{Advances in neural information processing systems}, 25, 2012.

\bibitem[Hernandez-Lobato et~al.(2016)Hernandez-Lobato, Li, Rowland, Bui,
  Hern{\'a}ndez-Lobato, and Turner]{hernandez2016black}
Hernandez-Lobato, J., Li, Y., Rowland, M., Bui, T., Hern{\'a}ndez-Lobato, D.,
  and Turner, R.
\newblock Black-box alpha divergence minimization.
\newblock In \emph{International conference on machine learning}, pp.\
  1511--1520. PMLR, 2016.

\bibitem[Heusel et~al.(2017)Heusel, Ramsauer, Unterthiner, Nessler, and
  Hochreiter]{heusel2017gans}
Heusel, M., Ramsauer, H., Unterthiner, T., Nessler, B., and Hochreiter, S.
\newblock Gans trained by a two time-scale update rule converge to a local nash
  equilibrium.
\newblock \emph{Advances in neural information processing systems}, 30, 2017.

\bibitem[Ho et~al.(2017)Ho, Nguyen, Yurochkin, Bui, Huynh, and
  Phung]{ho2017multilevel}
Ho, N., Nguyen, X., Yurochkin, M., Bui, H.~H., Huynh, V., and Phung, D.
\newblock Multilevel clustering via wasserstein means.
\newblock In \emph{International conference on machine learning}, pp.\
  1501--1509. PMLR, 2017.

\bibitem[Hoffman \& Blei(2015)Hoffman and Blei]{hoffman2015structured}
Hoffman, M.~D. and Blei, D.~M.
\newblock Structured stochastic variational inference.
\newblock In \emph{Artificial Intelligence and Statistics}, pp.\  361--369,
  2015.

\bibitem[Hoffman et~al.(2013)Hoffman, Blei, Wang, and
  Paisley]{hoffman2013stochastic}
Hoffman, M.~D., Blei, D.~M., Wang, C., and Paisley, J.
\newblock Stochastic variational inference.
\newblock \emph{Journal of Machine Learning Research}, 2013.

\bibitem[Hornik et~al.(1989)Hornik, Stinchcombe, and
  White]{hornik1989multilayer}
Hornik, K., Stinchcombe, M., and White, H.
\newblock Multilayer feedforward networks are universal approximators.
\newblock \emph{Neural networks}, 2\penalty0 (5):\penalty0 359--366, 1989.

\bibitem[Huynh et~al.(2020)Huynh, Zhao, and Phung]{huynh2020otlda}
Huynh, V., Zhao, H., and Phung, D.
\newblock {OTLDA}: A geometry-aware optimal transport approach for topic
  modeling.
\newblock In \emph{Advances in Neural Information Processing Systems},
  volume~33, pp.\  18573--18582, 2020.

\bibitem[Jang et~al.(2016)Jang, Gu, and Poole]{jang2016categorical}
Jang, E., Gu, S., and Poole, B.
\newblock Categorical reparameterization with gumbel-softmax.
\newblock \emph{arXiv preprint arXiv:1611.01144}, 2016.

\bibitem[Johnson \& Willsky(2014)Johnson and Willsky]{johnson2014stochastic}
Johnson, M. and Willsky, A.
\newblock Stochastic variational inference for bayesian time series models.
\newblock In \emph{International Conference on Machine Learning}, pp.\
  1854--1862. PMLR, 2014.

\bibitem[Jordan et~al.(1999)Jordan, Ghahramani, Jaakkola, and
  Saul]{jordan1999introduction}
Jordan, M.~I., Ghahramani, Z., Jaakkola, T.~S., and Saul, L.~K.
\newblock An introduction to variational methods for graphical models.
\newblock \emph{Machine learning}, 37:\penalty0 183--233, 1999.

\bibitem[Kantorovich(1960)]{kantorovich1960mathematical}
Kantorovich, L.~V.
\newblock Mathematical methods of organizing and planning production.
\newblock \emph{Management science}, 6\penalty0 (4):\penalty0 366--422, 1960.

\bibitem[Kantorovich(2006)]{kantorovich2006problem}
Kantorovich, L.~V.
\newblock On a problem of monge.
\newblock \emph{Journal of Mathematical Sciences}, 133\penalty0 (4):\penalty0
  1383--1383, 2006.

\bibitem[King \& Lawrence(2006)King and Lawrence]{king2006fast}
King, N.~J. and Lawrence, N.~D.
\newblock Fast variational inference for gaussian process models through
  kl-correction.
\newblock In \emph{Machine Learning: ECML 2006: 17th European Conference on
  Machine Learning Berlin, Germany, September 18-22, 2006 Proceedings 17}, pp.\
   270--281. Springer, 2006.

\bibitem[Kingma et~al.(2016)Kingma, Salimans, Jozefowicz, Chen, Sutskever, and
  Welling]{kingma2016improved}
Kingma, D.~P., Salimans, T., Jozefowicz, R., Chen, X., Sutskever, I., and
  Welling, M.
\newblock Improved variational inference with inverse autoregressive flow.
\newblock \emph{Advances in neural information processing systems}, 29, 2016.

\bibitem[Kolouri et~al.(2018)Kolouri, Rohde, and Hoffmann]{kolouri2018sliced}
Kolouri, S., Rohde, G.~K., and Hoffmann, H.
\newblock Sliced wasserstein distance for learning gaussian mixture models.
\newblock In \emph{Proceedings of the IEEE Conference on Computer Vision and
  Pattern Recognition}, pp.\  3427--3436, 2018.

\bibitem[L{\'a}zaro-Gredilla et~al.(2012)L{\'a}zaro-Gredilla, Van~Vaerenbergh,
  and Lawrence]{lazaro2012overlapping}
L{\'a}zaro-Gredilla, M., Van~Vaerenbergh, S., and Lawrence, N.~D.
\newblock Overlapping mixtures of gaussian processes for the data association
  problem.
\newblock \emph{Pattern recognition}, 45\penalty0 (4):\penalty0 1386--1395,
  2012.

\bibitem[LeCun et~al.(1998)LeCun, Bottou, Bengio, and
  Haffner]{lecun1998gradient}
LeCun, Y., Bottou, L., Bengio, Y., and Haffner, P.
\newblock Gradient-based learning applied to document recognition.
\newblock \emph{Proceedings of the IEEE}, 86\penalty0 (11):\penalty0
  2278--2324, 1998.

\bibitem[Li \& Turner(2016)Li and Turner]{li2016renyi}
Li, Y. and Turner, R.~E.
\newblock R{\'e}nyi divergence variational inference.
\newblock \emph{Advances in neural information processing systems}, 29, 2016.

\bibitem[Liang \& Klein(2009)Liang and Klein]{liang2009online}
Liang, P. and Klein, D.
\newblock Online em for unsupervised models.
\newblock In \emph{Proceedings of human language technologies: The 2009 annual
  conference of the North American chapter of the association for computational
  linguistics}, pp.\  611--619, 2009.

\bibitem[Liu et~al.(2015)Liu, Luo, Wang, and Tang]{liu2015deep}
Liu, Z., Luo, P., Wang, X., and Tang, X.
\newblock Deep learning face attributes in the wild.
\newblock In \emph{Proceedings of the IEEE international conference on computer
  vision}, pp.\  3730--3738, 2015.

\bibitem[Luong et~al.(2024)Luong, Nguyen, Ho, Haf, Phung, and
  Qu]{luong2024revisiting}
Luong, M., Nguyen, K., Ho, N., Haf, R., Phung, D., and Qu, L.
\newblock Revisiting deep audio-text retrieval through the lens of
  transportation.
\newblock \emph{arXiv preprint arXiv:2405.10084}, 2024.

\bibitem[MacKay(1998)]{mackay1998choice}
MacKay, D.~J.
\newblock Choice of basis for laplace approximation.
\newblock \emph{Machine learning}, 33:\penalty0 77--86, 1998.

\bibitem[Maddison et~al.(2016)Maddison, Mnih, and Teh]{maddison2016concrete}
Maddison, C.~J., Mnih, A., and Teh, Y.~W.
\newblock The concrete distribution: A continuous relaxation of discrete random
  variables.
\newblock \emph{arXiv preprint arXiv:1611.00712}, 2016.

\bibitem[Mena et~al.(2020)Mena, Nejatbakhsh, Varol, and
  Niles-Weed]{mena2020sinkhorn}
Mena, G., Nejatbakhsh, A., Varol, E., and Niles-Weed, J.
\newblock Sinkhorn em: an expectation-maximization algorithm based on entropic
  optimal transport.
\newblock \emph{arXiv preprint arXiv:2006.16548}, 2020.

\bibitem[Molchanov et~al.(2019)Molchanov, Kharitonov, Sobolev, and
  Vetrov]{molchanov2019doubly}
Molchanov, D., Kharitonov, V., Sobolev, A., and Vetrov, D.
\newblock Doubly semi-implicit variational inference.
\newblock In \emph{The 22nd International Conference on Artificial Intelligence
  and Statistics}, pp.\  2593--2602. PMLR, 2019.

\bibitem[Neal \& Hinton(1998)Neal and Hinton]{neal1998view}
Neal, R.~M. and Hinton, G.~E.
\newblock A view of the em algorithm that justifies incremental, sparse, and
  other variants.
\newblock \emph{Learning in graphical models}, pp.\  355--368, 1998.

\bibitem[Neath et~al.(2013)]{neath2013convergence}
Neath, R.~C. et~al.
\newblock On convergence properties of the monte carlo em algorithm.
\newblock \emph{Advances in modern statistical theory and applications: a
  Festschrift in Honor of Morris L. Eaton}, pp.\  43--62, 2013.

\bibitem[Netzer et~al.(2011)Netzer, Wang, Coates, Bissacco, Wu, Ng,
  et~al.]{netzer2011reading}
Netzer, Y., Wang, T., Coates, A., Bissacco, A., Wu, B., Ng, A.~Y., et~al.
\newblock Reading digits in natural images with unsupervised feature learning.
\newblock In \emph{NIPS workshop on deep learning and unsupervised feature
  learning}, volume 2011, pp.\ ~7. Granada, Spain, 2011.

\bibitem[Nguyen et~al.(2021)Nguyen, Le, Zhao, Tran, Nguyen, and
  Phung]{nguyen2021most}
Nguyen, T., Le, T., Zhao, H., Tran, Q.~H., Nguyen, T., and Phung, D.
\newblock Most: Multi-source domain adaptation via optimal transport for
  student-teacher learning.
\newblock In \emph{Uncertainty in Artificial Intelligence}, pp.\  225--235,
  2021.

\bibitem[Nguyen et~al.(2022)Nguyen, Nguyen, Le, Zhao, Tran, and
  Phung]{nguyen2022cycle}
Nguyen, T., Nguyen, V., Le, T., Zhao, H., Tran, Q.~H., and Phung, D.
\newblock Cycle class consistency with distributional optimal transport and
  knowledge distillation for unsupervised domain adaptation.
\newblock In \emph{Uncertainty in Artificial Intelligence}, pp.\  1519--1529.
  PMLR, 2022.

\bibitem[Papamakarios et~al.(2021)Papamakarios, Nalisnick, Rezende, Mohamed,
  and Lakshminarayanan]{Papamakarios2021}
Papamakarios, G., Nalisnick, E., Rezende, D.~J., Mohamed, S., and
  Lakshminarayanan, B.
\newblock {Normalizing flows for probabilistic modeling and inference}.
\newblock \emph{Journal of Machine Learning Research}, 22:\penalty0 1--64,
  2021.
\newblock ISSN 15337928.

\bibitem[Parmar et~al.(2018)Parmar, Vaswani, Uszkoreit, Kaiser, Shazeer, Ku,
  and Tran]{parmar2018image}
Parmar, N., Vaswani, A., Uszkoreit, J., Kaiser, L., Shazeer, N., Ku, A., and
  Tran, D.
\newblock Image transformer.
\newblock In \emph{International conference on machine learning}, pp.\
  4055--4064. PMLR, 2018.

\bibitem[Peyr{\'e} et~al.(2017)Peyr{\'e}, Cuturi,
  et~al.]{peyre2017computational}
Peyr{\'e}, G., Cuturi, M., et~al.
\newblock Computational optimal transport.
\newblock \emph{Center for Research in Economics and Statistics Working
  Papers}, \penalty0 (2017-86), 2017.

\bibitem[Ranganath et~al.(2014)Ranganath, Gerrish, and
  Blei]{ranganath2014black}
Ranganath, R., Gerrish, S., and Blei, D.
\newblock Black box variational inference.
\newblock In \emph{Artificial intelligence and statistics}, pp.\  814--822.
  PMLR, 2014.

\bibitem[Ranganath et~al.(2016)Ranganath, Tran, and
  Blei]{ranganath2016hierarchical}
Ranganath, R., Tran, D., and Blei, D.
\newblock Hierarchical variational models.
\newblock In \emph{International conference on machine learning}, pp.\
  324--333. PMLR, 2016.

\bibitem[Ruiz et~al.(2016)Ruiz, AUEB, Blei, et~al.]{ruiz2016generalized}
Ruiz, F.~R., AUEB, T.~R., Blei, D., et~al.
\newblock The generalized reparameterization gradient.
\newblock \emph{Advances in neural information processing systems}, 29, 2016.

\bibitem[Santambrogio(2015)]{santambrogio2015optimal}
Santambrogio, F.
\newblock Optimal transport for applied mathematicians.
\newblock \emph{Birk{\"a}user, NY}, 55\penalty0 (58-63):\penalty0 94, 2015.

\bibitem[Saul \& Jordan(1995)Saul and Jordan]{saul1995exploiting}
Saul, L. and Jordan, M.
\newblock Exploiting tractable substructures in intractable networks.
\newblock \emph{Advances in neural information processing systems}, 8, 1995.

\bibitem[Srivastava \& Sutton(2017)Srivastava and
  Sutton]{srivastava2017autoencoding}
Srivastava, A. and Sutton, C.
\newblock Autoencoding variational inference for topic models.
\newblock \emph{arXiv preprint arXiv:1703.01488}, 2017.

\bibitem[Takida et~al.(2022)Takida, Shibuya, Liao, Lai, Ohmura, Uesaka, Murata,
  Takahashi, Kumakura, and Mitsufuji]{takida2022sq}
Takida, Y., Shibuya, T., Liao, W., Lai, C.-H., Ohmura, J., Uesaka, T., Murata,
  N., Takahashi, S., Kumakura, T., and Mitsufuji, Y.
\newblock Sq-vae: Variational bayes on discrete representation with
  self-annealed stochastic quantization.
\newblock In \emph{International Conference on Machine Learning}, pp.\
  20987--21012. PMLR, 2022.

\bibitem[Teh et~al.(2006)Teh, Newman, and Welling]{teh2006collapsed}
Teh, Y., Newman, D., and Welling, M.
\newblock A collapsed variational bayesian inference algorithm for latent
  dirichlet allocation.
\newblock \emph{Advances in neural information processing systems}, 19, 2006.

\bibitem[Teicher(1960)]{teicher1960mixture}
Teicher, H.
\newblock On the mixture of distributions.
\newblock \emph{The Annals of Mathematical Statistics}, 31\penalty0
  (1):\penalty0 55--73, 1960.

\bibitem[Titsias \& Ruiz(2019)Titsias and Ruiz]{titsias2019unbiased}
Titsias, M.~K. and Ruiz, F.
\newblock Unbiased implicit variational inference.
\newblock In \emph{The 22nd International Conference on Artificial Intelligence
  and Statistics}, pp.\  167--176. PMLR, 2019.

\bibitem[Tolstikhin et~al.(2017)Tolstikhin, Bousquet, Gelly, and
  Schoelkopf]{tolstikhin2017wasserstein}
Tolstikhin, I., Bousquet, O., Gelly, S., and Schoelkopf, B.
\newblock Wasserstein auto-encoders.
\newblock \emph{arXiv preprint arXiv:1711.01558}, 2017.

\bibitem[Tran et~al.(2015)Tran, Blei, and Airoldi]{tran2015copula}
Tran, D., Blei, D., and Airoldi, E.~M.
\newblock Copula variational inference.
\newblock \emph{Advances in neural information processing systems}, 28, 2015.

\bibitem[Van Den~Oord et~al.(2017)Van Den~Oord, Vinyals, et~al.]{van2017neural}
Van Den~Oord, A., Vinyals, O., et~al.
\newblock Neural discrete representation learning.
\newblock \emph{Advances in neural information processing systems}, 30, 2017.

\bibitem[Villani et~al.(2009)]{villani2009optimal}
Villani, C. et~al.
\newblock \emph{Optimal transport: old and new}, volume 338.
\newblock Springer, 2009.

\bibitem[Vo et~al.(2024)Vo, Zhao, Le, Bonilla, and Phung]{vo2024optimal}
Vo, V., Zhao, H., Le, T., Bonilla, E.~V., and Phung, D.
\newblock Optimal transport for structure learning under missing data.
\newblock In \emph{International Conference on Machine Learning}, 2024.

\bibitem[Vuong et~al.(2023)Vuong, Le, Zhao, Zheng, Harandi, Cai, and
  Phung]{vuong2023vector}
Vuong, T.-L., Le, T., Zhao, H., Zheng, C., Harandi, M., Cai, J., and Phung, D.
\newblock Vector quantized wasserstein auto-encoder.
\newblock \emph{arXiv preprint arXiv:2302.05917}, 2023.

\bibitem[Wan et~al.(2020)Wan, Li, and Hovakimyan]{wan2020f}
Wan, N., Li, D., and Hovakimyan, N.
\newblock F-divergence variational inference.
\newblock \emph{Advances in neural information processing systems},
  33:\penalty0 17370--17379, 2020.

\bibitem[Wang et~al.(2022)Wang, Guo, Zhao, Zheng, Tanwisuth, Chen, Zhou,
  et~al.]{wanrepresenting2022}
Wang, D., Guo, D., Zhao, H., Zheng, H., Tanwisuth, K., Chen, B., Zhou, M.,
  et~al.
\newblock Representing mixtures of word embeddings with mixtures of topic
  embeddings.
\newblock In \emph{International Conference on Learning Representations}, 2022.

\bibitem[Wang(2019)]{wang2019convergence}
Wang, Y.
\newblock Convergence rates of latent topic models under relaxed
  identifiability conditions.
\newblock 2019.

\bibitem[Wei \& Tanner(1990)Wei and Tanner]{wei1990monte}
Wei, G.~C. and Tanner, M.~A.
\newblock A monte carlo implementation of the em algorithm and the poor man's
  data augmentation algorithms.
\newblock \emph{Journal of the American statistical Association}, 85\penalty0
  (411):\penalty0 699--704, 1990.

\bibitem[Xu et~al.(2019)Xu, Quiroz, Kohn, and Sisson]{xu2019variance}
Xu, M., Quiroz, M., Kohn, R., and Sisson, S.~A.
\newblock Variance reduction properties of the reparameterization trick.
\newblock In \emph{The 22nd International Conference on Artificial Intelligence
  and Statistics}, pp.\  2711--2720. PMLR, 2019.

\bibitem[Ye et~al.(2024)Ye, Fan, Song, Zheng, Zhao, dan Guo, and
  Chang]{ye2024ptarl}
Ye, H., Fan, W., Song, X., Zheng, S., Zhao, H., dan Guo, D., and Chang, Y.
\newblock Ptarl: Prototype-based tabular representation learning via space
  calibration.
\newblock In \emph{International Conference on Learning Representations}, 2024.

\bibitem[Yin \& Zhou(2018)Yin and Zhou]{yin2018semi}
Yin, M. and Zhou, M.
\newblock Semi-implicit variational inference.
\newblock In \emph{International Conference on Machine Learning}, pp.\
  5660--5669. PMLR, 2018.

\bibitem[Yu et~al.(2021)Yu, Li, Koh, Zhang, Pang, Qin, Ku, Xu, Baldridge, and
  Wu]{yu2021vector}
Yu, J., Li, X., Koh, J.~Y., Zhang, H., Pang, R., Qin, J., Ku, A., Xu, Y.,
  Baldridge, J., and Wu, Y.
\newblock Vector-quantized image modeling with improved vqgan.
\newblock \emph{arXiv preprint arXiv:2110.04627}, 2021.

\bibitem[Yu et~al.(2019)Yu, Chen, Gao, and Yu]{yu2019dag}
Yu, Y., Chen, J., Gao, T., and Yu, M.
\newblock Dag-gnn: Dag structure learning with graph neural networks.
\newblock In \emph{International Conference on Machine Learning}, pp.\
  7154--7163. PMLR, 2019.

\bibitem[Zhang et~al.(2018)Zhang, Isola, Efros, Shechtman, and
  Wang]{zhang2018unreasonable}
Zhang, R., Isola, P., Efros, A.~A., Shechtman, E., and Wang, O.
\newblock The unreasonable effectiveness of deep features as a perceptual
  metric.
\newblock In \emph{Proceedings of the IEEE conference on computer vision and
  pattern recognition}, pp.\  586--595, 2018.

\bibitem[Zhao et~al.(2020)Zhao, Phung, Huynh, Le, and Buntine]{zhao2020neural}
Zhao, H., Phung, D., Huynh, V., Le, T., and Buntine, W.
\newblock Neural topic model via optimal transport.
\newblock In \emph{International Conference on Learning Representations}, 2020.

\bibitem[Zhao et~al.(2021)Zhao, Phung, Huynh, Jin, Du, and
  Buntine]{zhao2021topic}
Zhao, H., Phung, D., Huynh, V., Jin, Y., Du, L., and Buntine, W.
\newblock Topic modelling meets deep neural networks: a survey.
\newblock In \emph{International Joint Conference on Artificial Intelligence
  2021}, pp.\  4713--4720, 2021.

\end{thebibliography}
\newpage

%%%%%%%%%%%%%%%%%%%%%%%%%%%%%%%%%%%%%%%%%%%%%%%%%%%%%%%%%%

\appendix
\onecolumn
\section{Proof}\label{sup:proof}
We now present the proof of Theorem \ref{theorem:1} which is the key theorem in our paper.

\textbf{Theorem \ref{theorem:1}.} \textit{For every $\phi_i$ as defined above and fixed $\psi_{\theta}$}, 
\begin{equation*}\label{eq:sup_main}  
W_c \big(P_d(X_\rmO); P_{\theta}(X_\rmO)\big) = 
\underset{\bigl[\phi_i \in \mathfrak{C}(X_i)\bigr]_{i \in \rmO}}{\mathrm{inf}} \ \mathbb{E}_{
    X_\rmO \sim P_d (X_\rmO),
    \mathrm{PA}_{X_\rmO} \sim \phi(X_\rmO)
    } 
    \bigl[ c \bigl(X_\rmO, \psi_{\theta}(\mathrm{PA}_{X_\rmO}) \bigr) \bigr], 
\end{equation*}
where $\mathrm{PA}_{X_\rmO} := \big[[X_{ij}]_{j \in \mathrm{PA}_{X_i}}\big]_{i \in \rmO}$.

\begin{proof}
    Let $\Gamma \in \mathcal{P}(P_d(X_{\mathbf{O}}),P_\theta(X_{\mathbf{O}}))$ be the optimal joint distribution over $P_d(X_{\mathbf{O}})$ and $P_\theta(X_{\mathbf{O}})$ of the corresponding Wasserstein distance. We consider three distributions: $P_d(X_{\mathbf{O}})$ over $A = \prod_{i\in\mathbf{O}}\mathcal{X}_{i}$, $P_\theta(X_{\mathbf{O}}))$ over $C = \prod_{i\in\mathbf{O}}\mathcal{X}_{i}$, and $ P_\theta(\textrm{PA}_{X_\mathbf{O}})= P_\theta([\textrm{PA}_{X_i}]_{i\in \mathbf{O}})$ over $B = \prod_{i\in\mathbf{O}}\prod_{k\in \textrm{PA}_{X_{i}}}\mathcal{X}_{k}$. Here we note that the last distribution $P_\theta(\textrm{PA}_{X_\mathbf{O}})=P_\theta([\textrm{PA}_{X_i}]_{i\in \mathbf{O}})$ is the model distribution over the parent nodes of the observed nodes. 

    It is evident that $\Gamma \in \mathcal{P}(P_d(X_{\mathbf{O}}),P_\theta(X_{\mathbf{O}}))$ is a joint distribution over $P_d(X_{\mathbf{O}})$ and$P_\theta(X_{\mathbf{O}})$; let $\beta=(id, \psi_\theta)\# P_\theta([\textrm{PA}_{X_i}]_{i\in \mathbf{O}})$ be a deterministic coupling or joint distribution over $P_\theta([\textrm{PA}_{X_i}]_{i\in \mathbf{O}})$ and $P_\theta(X_{\mathbf{O}})$. Using the gluing lemma (see Lemma 5.5 in \cite{santambrogio2015optimal}), there exists a joint distribution $\alpha$ over $A \times B \times C$ such that $\alpha_{AC} = (\pi_A, \pi_C)\#\alpha = \Gamma$ and $\alpha_{BC} = (\pi_B, \pi_C)\#\alpha = \beta$ where $\pi$ is the projection operation. Let us denote $\gamma = (\pi_A, \pi_B)\#\alpha$ as a joint distribution over $P_d(X_{\mathbf{O}})$ and  $P_\theta([\textrm{PA}_{X_i}]_{i\in \mathbf{O}})$. 

    Given $i \in \mathbf{O}$, we denote $\gamma_i$ as the projection of $\gamma$ over $\mathcal{X}_i$ and $\prod_{k \in \textrm{PA}_{X_i}} \mathcal{X}_k$. We further denote $\phi_i(X_i) = \gamma_i(\cdot \mid X_i)$ as a stochastic map from $\mathcal{X}_i$ to $\prod_{k \in \textrm{PA}_{X_i}} \mathcal{X}_k$. It is worth noting that because $\gamma_i$ is a joint distribution over $P_d(X_i)$ and $P_\theta(\textrm{PA}_{X_i})$, $\phi_i \in \mathfrak{C}(X_i)$.  

    \begin{align}
W_{c}\left(P_{d}\left(X_{\mathbf{O}}\right),P_{\theta}\left(X_{\mathbf{O}}\right)\right) & =\mathbb{E}_{\left(X_{\mathbf{O}},\tilde{X}_{\mathbf{O}}\right)\sim\Gamma}\left[c\left(X_{\mathbf{O}},\tilde{X}_{\mathbf{O}}\right)\right]=\mathbb{E}_{\left(X_{\mathbf{O}},\textrm{PA}_{X_{\mathbf{O}}},\tilde{X}_{\mathbf{O}}\right)\sim\alpha}\left[c\left(X_{\mathbf{O}},\tilde{X}_{\mathbf{O}}\right)\right]\nonumber \\
= & \mathbb{E}_{X_{\mathbf{O}}\sim P_{d},\left[\textrm{PA}_{X_{i}}\sim\gamma_i\left(\cdot\mid X_{i}\right)\right]_{i\in\mathbf{O}},\tilde{X}_{\mathbf{O}}\sim\alpha_{BC}\left(\cdot\mid \textrm{PA}_{X_{\mathbf{o}}}\right)}\left[c\left(X_{\mathbf{O}},\tilde{X}_{\mathbf{O}}\right)\right]\nonumber \\
\overset{(1)}{=} & \mathbb{E}_{X_{\mathbf{O}}\sim P_{d},\left[\textrm{PA}_{X_{i}}=\phi_{i}\left(X_{i}\right)\right]_{i\in\mathbf{O}},\tilde{X}_{\mathbf{O}}=\psi_{\theta}\left(\textrm{PA}_{X_{\mathbf{o}}}\right)}\left[c\left(X_{\mathbf{O}},\tilde{X}_{\mathbf{O}}\right)\right]\nonumber \\
= & \mathbb{E}_{X_{\mathbf{O}}\sim P_{d},\textrm{PA}_{X_{\mathbf{O}}}=\phi\left(X_{\mathbf{O}}\right),\tilde{X}_{\mathbf{O}}=\psi_{\theta}\left(\textrm{PA}_{X_{\mathbf{O}}}\right)}\left[c\left(X_{\mathbf{O}},\tilde{X}_{\mathbf{O}}\right)\right]\nonumber \\
\overset{(2)}{=} & \mathbb{E}_{X_{\mathbf{O}}\sim P_{d},\textrm{PA}_{X_{\mathbf{O}}}=\phi\left(X_{\mathbf{O}}\right)}\left[c\left(X_{\mathbf{O}},\psi_{\theta}\left(\textrm{PA}_{X_{\mathbf{O}}}\right)\right)\right]\nonumber \\
\geq & \inf_{\left[\phi_{i}\in\mathfrak{C}(X_{i})\right]_{i\in\mathbf{O}}}\mathbb{E}_{X_{\mathbf{O}}\sim P_{d},\textrm{PA}_{X_{\mathbf{O}}}=\phi\left(X_{\mathbf{O}}\right)}\left[c\left(X_{\mathbf{O}},\psi_{\theta}\left(\textrm{PA}_{X_{\mathbf{O}}}\right)\right)\right].\label{eq:1_side}
\end{align}
Here we note that we have $\overset{(1)}{=}$ because $\alpha_{BC}$ is a deterministic coupling and we have $\overset{(2)}{=}$ because the expectation is preserved through a deterministic push-forward map. 

Let $[\phi_i \in \mathfrak{C}(X_i)]_{i \in \mathbf{O}}$ be the optimal backward maps of the optimization problem (OP) in (\ref{eq:sup_main}). We define the joint distribution $\gamma$ over $P_{d}\left(X_{\mathbf{O}}\right)$ and $P_\theta(\textrm{PA}_{X_\mathbf{O}})=P_\theta([\textrm{PA}_{X_i}]_{i\in \mathbf{O}})$ as follows. We first sample $X_\mathbf{O} \sim P_d(X_\mathbf{O})$ and for each $i \in \mathbf{O}$, we sample $\textrm{PA}_{X_i} \sim \phi_i(X_i)$, and finally gather $(X_\mathbf{O}, \textrm{PA}_{X_\mathbf{O}}) \sim \gamma$ where $\textrm{PA}_{X_\mathbf{O}} = [\textrm{PA}_{X_i}]_{i \in \mathbf{O}}$. Consider the joint distribution $\gamma$ over $P_{d}\left(X_{\mathbf{O}}\right)$ , $P_\theta(\textrm{PA}_{X_\mathbf{O}})=P_\theta([\textrm{PA}_{X_i}]_{i\in \mathbf{O}})$ and the deterministic coupling or joint distribution $\beta=(id, \psi_\theta)\# P_\theta([\textrm{PA}_{X_i}]_{i\in \mathbf{O}})$  over $P_\theta([\textrm{PA}_{X_i}]_{i\in \mathbf{O}})$ and $P_\theta(X_{\mathbf{O}})$, the gluing lemma indicates the existence of the joint distribution $\alpha$ over $A \times C \times B$ such that $\alpha_{AB} = (\pi_A, \pi_B)\#\alpha = \gamma$ and $\alpha_{BC} = (\pi_B, \pi_C)\#\alpha = \beta$. We further denote $\Gamma = \alpha_{AC} = (\pi_A, \pi_C)\#\alpha$ which is a joint distribution over $P_d(X_{\mathbf{O}})$ and $P_\theta(X_{\mathbf{O}})$. It follows that
\begin{align}
 & \inf_{\left[\phi_{i}\in\mathfrak{C}(X_{i})\right]_{i\in\mathbf{O}}}\mathbb{E}_{X_{\mathbf{O}}\sim P_{d},\textrm{PA}_{X_{\mathbf{O}}}=\phi\left(X_{\mathbf{O}}\right)}\left[c\left(X_{\mathbf{O}},\psi_{\theta}\left(\textrm{PA}_{X_{\mathbf{O}}}\right)\right)\right]\nonumber \\
= & \mathbb{E}_{X_{\mathbf{O}}\sim P_{d},\textrm{PA}_{X_{\mathbf{O}}}=\phi\left(X_{\mathbf{O}}\right)}\left[c\left(X_{\mathbf{O}},\psi_{\theta}\left(\textrm{PA}_{X_{\mathbf{O}}}\right)\right)\right]\nonumber \\
\overset{(1)}{=} & \mathbb{E}_{X_{\mathbf{O}}\sim P_{d},\textrm{PA}_{X_{\mathbf{O}}}\sim\phi\left(X_{\mathbf{O}}\right),\tilde{X}_{\mathbf{O}}=\psi_{\theta}\left(\textrm{PA}_{X_{\mathbf{O}}}\right)}\left[c\left(X_{\mathbf{O}},\tilde{X}_{\mathbf{O}}\right)\right]\nonumber \\
= & \mathbb{E}_{X_{\mathbf{O}}\sim P_{d},\textrm{PA}_{X_{\mathbf{O}}}\sim\gamma\left(\cdot\mid X_{\mathbf{O}}\right),\tilde{X}_{\mathbf{O}}\sim\alpha_{BC}\left(\cdot\mid \textrm{PA}_{X_{\mathbf{o}}}\right)}\left[c\left(X_{\mathbf{O}},\tilde{X}_{\mathbf{O}}\right)\right]\nonumber \\
= & \mathbb{E}_{\left(X_{\mathbf{O}},\textrm{PA}_{X_{\mathbf{O}}},\tilde{X}_{\mathbf{O}}\right)\sim\alpha}\left[c\left(X_{\mathbf{O}},\tilde{X}_{\mathbf{O}}\right)\right]\nonumber \\
= & \mathbb{E}_{\left(X_{\mathbf{O}},\tilde{X}_{\mathbf{O}}\right)\sim\Gamma}\left[c\left(X_{\mathbf{O}},\tilde{X}_{\mathbf{O}}\right)\right]\geq W_{c}\left(P_{d}\left(X_{\mathbf{O}}\right),P_{\theta}\left(X_{\mathbf{O}}\right)\right).\label{eq:2_side}
\end{align}
Here we note that we have $\overset{(1)}{=}$ because the expectation is preserved through a deterministic push-forward map. 

Finally, combining (\ref{eq:1_side}) and (\ref{eq:2_side}), we reach the conclusion.
\end{proof}

It is worth noting that according to Theorem \ref{theorem:1}, we need to solve the following OP: 
\begin{equation}
\underset{\bigl[\phi_i \in \mathfrak{C}(X_i)\bigr]_{i \in \rmO}}{\mathrm{inf}} \ \mathbb{E}_{
    X_\rmO \sim P_d (X_\rmO),
    \mathrm{PA}_{X_\rmO} \sim \phi(X_\rmO)
    } 
    \bigl[ c \bigl(X_\rmO, \psi_{\theta}(\mathrm{PA}_{X_\rmO}) \bigr) \bigr],   
\end{equation}
where $\mathfrak{C}\left(X_{i}\right)=\left\{ \phi_{i}:\phi_{i}\#P_{d}\left(X_{i}\right)=P_{\theta}\left(\mathrm{PA}_{X_{i}}\right)\right\} ,\forall i\in\mathbf{O}$.

If we make some further assumptions including: (i) the family model distributions $P_\theta, \theta \in \Theta$ induced by the graphical model is sufficiently rich to contain the data distribution, meaning that there exist $\theta^* \in \Theta$ such that $P_{\theta^*}(X_{\mathbf{O}}) = P_d(X_{\mathbf{O}})$ and (ii) the family of backward maps $\phi_i, i \in \mathbf{O}$ has infinite capacity (i.e., they include all measure functions), the infimum really peaks $0$ at an optimal backward maps $\phi_i^*, i \in \mathbf{O}$. We thus can replace the infimum by a minimization as
\begin{equation}
\label{eq:min_main}
\underset{\bigl[\phi_i \in \mathfrak{C}(X_i)\bigr]_{i \in \rmO}}{\mathrm{min}} \ \mathbb{E}_{
    X_\rmO \sim P_d (X_\rmO),
    \mathrm{PA}_{X_\rmO} \sim \phi(X_\rmO)
    } 
    \bigl[ c \bigl(X_\rmO, \psi_{\theta}(\mathrm{PA}_{X_\rmO}) \bigr) \bigr].
\end{equation}

To make the OP in (\ref{eq:min_main}) tractable for training, we do relaxation as
\begin{equation}
\min_{\phi}\left\{ \mathbb{E}_{X_{\mathbf{O}}\sim P_{d}(X_{\mathbf{O}}),\mathrm{PA}_{X_{\mathbf{O}}}\sim\phi(X_{\mathbf{O}})}\left[c\bigl(X_{\mathbf{O}},\psi_{\theta}(\mathrm{PA}_{X_{\mathbf{O}}})\bigr)\right]+\eta D\left(P_{\phi},P_{\theta}\left(\mathrm{PA}_{X_{\mathbf{O}}}\right)\right)\right\} ,\label{eq:relax}
\end{equation}
where $\eta >0$, $P_\phi$ is the distribution induced by the backward maps, and $D$ represents a general divergence. Here we note that $D\left(P_{\phi},P_{\theta}\left(\mathrm{PA}_{X_{\mathbf{O}}}\right)\right)$ can be decomposed into
\[
D\left(P_{\phi},P_{\theta}\left(\mathrm{PA}_{X_{\mathbf{O}}}\right)\right)=\sum_{i\in\mathbf{O}}D_i\left(P_{\phi_{i}},P_{\theta}\left(\mathrm{PA}_{X_{\mathbf{i}}}\right)\right),
\]
which is the sum of the divergences between the specific backward map distributions and their corresponding model distributions on the parent nodes (i.e., $P_{\phi_{i}}=\phi_{i}\#P_{d}\left(X_{i}\right)$). Additionally, in practice, using the WS distance for $D_i$ leads to the following OP
\begin{equation}
\min_{\phi}\left\{ \mathbb{E}_{X_{\mathbf{O}}\sim P_{d}(X_{\mathbf{O}}),\mathrm{PA}_{X_{\mathbf{O}}}\sim\phi(X_{\mathbf{O}})}\left[c\bigl(X_{\mathbf{O}},\psi_{\theta}(\mathrm{PA}_{X_{\mathbf{O}}})\bigr)\right]+\eta\sum_{i\in\mathbf{O}} \textrm{W}_{c_{i}}\left(P_{\phi_{i}},P_{\theta}\left(\mathrm{PA}_{X_{i}}\right)\right)\right\} .\label{eq:relax-WS}
\end{equation}

The following theorem characterizes the ability to search the optimal solutions for the OPs in (\ref{eq:min_main}), (\ref{eq:relax}), and (\ref{eq:relax-WS}).

\begin{theorem} Assume that the family model distributions $P_\theta, \theta \in \Theta$ induced by the graphical model is sufficiently rich to contain the data distribution, meaning that there exist $\theta^* \in \Theta$ such that $P_{\theta^*}(X_{\mathbf{O}}) = P_d(X_{\mathbf{O}})$ and the family of backward maps $\phi_i, i \in \mathbf{O}$ has infinite capacity (i.e., they include all measure functions).  The OPs in (\ref{eq:min_main}), (\ref{eq:relax}), and (\ref{eq:relax-WS}) are equivalent and can obtain the common optimal solution.
\end{theorem}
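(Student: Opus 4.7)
The plan is to exploit the two richness assumptions --- that $\{P_\theta : \theta \in \Theta\}$ contains $P_d(X_\rmO)$ and that the backward-map class is unconstrained --- to show that the common minimum value of all three OPs is $0$, attained at a common pair $(\theta^*, \phi^*)$, and then argue that any minimizer of one problem must minimize the others.

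First I would apply Theorem \ref{theorem:1} directly to (\ref{eq:min_main}): for each fixed $\theta$, the inner infimum equals $W_c\big(P_d(X_\rmO), P_\theta(X_\rmO)\big)$, which under assumption (i) vanishes at $\theta = \theta^*$. Inspecting the gluing construction used in the proof of Theorem \ref{theorem:1} (take $\Gamma$ to be the identity coupling of $P_d(X_\rmO)$ with $P_{\theta^*}(X_\rmO)$, glue with the deterministic coupling $\beta = (\mathrm{id}, \psi_{\theta^*})\# P_{\theta^*}(\mathrm{PA}_{X_\rmO})$, and define $\phi_i^* = \gamma_i(\cdot \mid X_i)$), assumption (ii) on the infinite capacity of the backward-map family guarantees that the resulting $\phi_i^*$ lies in $\mathfrak{C}(X_i)$ and that the reconstruction cost at $(\theta^*, \phi^*)$ is exactly $0$. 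Since $\phi_i^* \# P_d(X_i) = P_{\theta^*}(\mathrm{PA}_{X_i})$, every divergence term $D\bigl(P_{\phi_i^*}, P_{\theta^*}(\mathrm{PA}_{X_i})\bigr)$ is also $0$, so $(\theta^*, \phi^*)$ achieves objective value $0$ in all three OPs (\ref{eq:min_main}), (\ref{eq:relax}), and (\ref{eq:relax-WS}).

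The converse direction is then a sandwich argument. Each of the three objectives is non-negative, so the infimum is bounded below by $0$; the witness $(\theta^*, \phi^*)$ shows this lower bound is attained. Any minimizer $(\hat\theta, \hat\phi)$ of (\ref{eq:relax}) or (\ref{eq:relax-WS}) must therefore drive both the reconstruction term and the penalty to $0$ separately. The penalty vanishing forces $\hat\phi_i \# P_d(X_i) = P_{\hat\theta}(\mathrm{PA}_{X_i})$, i.e.\ $\hat\phi_i \in \mathfrak{C}(X_i)$, so that $(\hat\theta, \hat\phi)$ is feasible for the constrained problem (\ref{eq:min_main}); the reconstruction term vanishing makes it optimal there as well. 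The reverse inclusion (minimizers of (\ref{eq:min_main}) are minimizers of the relaxed problems) is immediate, since feasibility kills the penalty while the reconstruction part is shared.

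The main obstacle I foresee is the faithfulness of $D$: the converse step requires that $D(\mu, \nu) = 0 \Rightarrow \mu = \nu$, which should be stated or assumed as part of the setup of (\ref{eq:relax}). For the Wasserstein instantiation in (\ref{eq:relax-WS}) this is free as long as each $c_i$ is metric-induced so that $W_{c_i}$ is a genuine distance. A secondary and largely technical concern is the measurability of the constructed $\phi_i^*$ coming from the disintegration in the gluing step, but under the standard Polish-space assumptions on the variable spaces this is handled exactly as in Theorem \ref{theorem:1}; no new analytical machinery is needed.
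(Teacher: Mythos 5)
Your proposal is correct and follows essentially the same route as the paper: exhibit $\theta^*$ with $W_c(P_d,P_{\theta^*})=0$, glue the identity coupling with the deterministic coupling $(\mathrm{id},\psi_{\theta^*})\#P_{\theta^*}(\mathrm{PA}_{X_\rmO})$, and read off $\phi_i^*$ from the disintegration so that both the reconstruction term and the push-forward divergences vanish, giving a common zero-valued optimum for all three OPs. The only difference is that you additionally spell out the converse sandwich argument (that any minimizer of the relaxed problems is feasible and optimal for the constrained one) and correctly flag that this needs $D$ to be faithful; the paper's proof stops at the forward construction, so your version is a modest strengthening rather than a different approach.
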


\begin{proof}
Let $\theta^* \in \Theta$ be the optimal solution such that $P_{\theta^*}(X_{\mathbf{O}}) = P_d(X_{\mathbf{O}})$ and $W_{c}\left(P_{d}\left(X_{\mathbf{O}}\right),P_{\theta^*}\left(X_{\mathbf{O}}\right)\right) = 0$. Let $\Gamma^* \in \mathcal{P}(P_d(X_{\mathbf{O}}),P_\theta(X_{\mathbf{O}}))$ be the optimal joint distribution over $P_d(X_{\mathbf{O}})$ and $P_\theta(X_{\mathbf{O}})$ of the corresponding Wasserstein distance, meaning that if $(X_{\mathbf{O}}, \Tilde{X}_{\mathbf{O}}) \sim \Gamma^*$ then $X_{\mathbf{O}} = \Tilde{X}_{\mathbf{O}}$. Using the gluing lemma as in the previous theorem, there exists a joint distribution $\alpha^*$ over $A \times B \times C$ such that $\alpha^*_{AC} = (\pi_A, \pi_C)\#\alpha^* = \Gamma^*$ and $\alpha^*_{BC} = (\pi_B, \pi_C)\#\alpha^* = \beta^*$ where $\beta^*=(id, \psi_\theta)\# P_\theta^*([\textrm{PA}_{X_i}]_{i\in \mathbf{O}})$ is a deterministic coupling or joint distribution over $P_\theta([\textrm{PA}_{X_i}]_{i\in \mathbf{O}})$ and $P_\theta^*(X_{\mathbf{O}})$. This follows that $\alpha^*$ consists of the sample $(X_\mathbf{O}, \mathrm{PA}_{X_\mathbf{O}}, X_\mathbf{O})$ where $\psi_{\theta^*}(\mathrm{PA}_{X_\mathbf{O}})= X_\mathbf{O}$ with $X_\mathbf{O} \sim P_d(X_\mathbf{O})= P_\theta^*(X_\mathbf{O})$.

Let us denote $\gamma^* = (\pi_A, \pi_B)\#\alpha^*$ as a joint distribution over $P_d(X_{\mathbf{O}})$ and  $P_\theta^*([\textrm{PA}_{X_i}]_{i\in \mathbf{O}})$. Let $\gamma_i^*, i \in \mathbf{O}$ as the restriction of $\gamma^*$ over $P_d(X_{i})$ and $P_\theta^*(\textrm{PA}_{X_i})$. Let $\phi^*_i, i \in \mathbf{O}$ be the functions in the family of the backward functions that can well-approximate $\gamma^*_i, i \in \mathbf{O}$ (i.e., $\phi^*_i = \gamma^*_i, i \in \mathbf{O}$). For any $X_\mathbf{O} \sim P_d(X_\mathbf{O})$, we have for all $i \in \mathbf{O}$, $\mathrm{PA}_{X_i} = \phi^*_i(X_i)$ and $\psi_{\theta^*}(\mathrm{PA}_{X_i}) = X_i$. These imply that (i) $\mathbb{E}_{X_{\mathbf{O}}\sim P_{d}(X_{\mathbf{O}}),\mathrm{PA}_{X_{\mathbf{O}}}\sim\phi^*(X_{\mathbf{O}})}\left[c\bigl(X_{\mathbf{O}},\psi_{\theta^*}(\mathrm{PA}_{X_{\mathbf{O}}})\bigr)\right] =0$ and (ii) $P_{\phi^*_{i}} = P_{\theta^*}\left(\mathrm{PA}_{X_{i}}\right), \forall i \in \mathbf{O}$, which further indicate that the OPs in (\ref{eq:min_main}), (\ref{eq:relax}), and (\ref{eq:relax-WS}) are minimized at $0$ with the common optimal solution $\phi^*$ and $\theta^*$.         
\end{proof}

\section{OTP-DAG as a generalization of WAE}\label{sup:ae_relation}

\begin{figure}[hbt!]
\centering
     \begin{subfigure}
         \centering\includegraphics[width=0.40\linewidth]{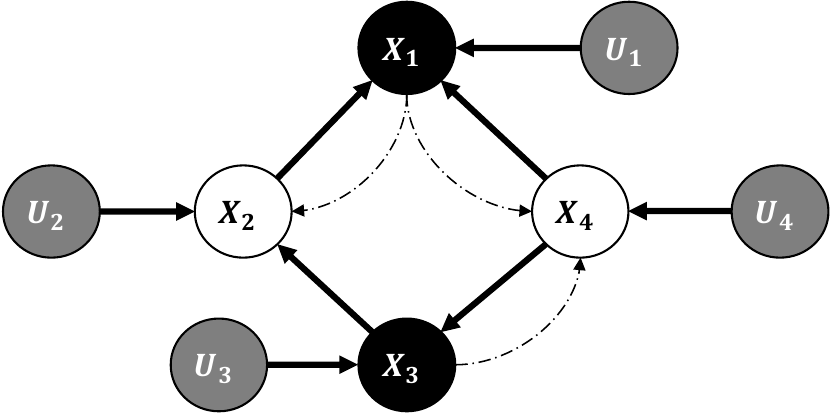}
     \end{subfigure}~
     \hspace{5mm}
     \begin{subfigure}
         \centering
         \includegraphics[width=0.10\linewidth]{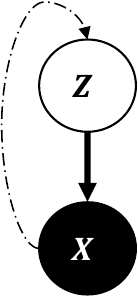}         
     \end{subfigure}
     \caption{\textbf{(Left)} Algorithmic DAG. \textbf{(Right)} Standard Auto-encoder.} 
     \label{fig:wae}
\end{figure}

Figure \ref{fig:wae} sheds light on an interesting connection of our method with auto-encoding models. Considering a graphical model of only two nodes: the observed node $X$ and its latent parent $Z$, we define a backward map $\phi$ over $X$ such that $\phi \# P_d(X) = P_{\theta}(Z)$ where $P_{\theta}(Z)$ is the prior over $Z$. The backward map can be viewed as a (stochastic) encoder approximating the prior $P_{\theta}(Z)$ with $P_{\phi}(Z) := \mathbb{E}_{X}[\phi(Z \vert X)]$. OTP-DAG now reduces to Wasserstein auto-encoder WAE \citep{tolstikhin2017wasserstein}, where the forward mapping $\psi$ plays the role of the decoder. OTP-DAG therefore serves as a generalization of WAE for learning a more complex structure where there is the interplay of more parameters and hidden variables. 

In this simplistic case, our training procedure is precisely as follows:

\begin{enumerate}[noitemsep]
    \item Draw $x \sim P_d(X)$.
    \item Draw $z \sim \phi(Z \vert X)$. 
    \item Draw $\widetilde{x} \sim P_{\theta}(X | Z)$.
    \item Update $\phi$ and $\theta$ alternately by descending objective (\ref{eq:finalobj}).
\end{enumerate}

Our cost function explicitly minimizes two terms:
(1) the push-forward divergence $D[P_{\phi}(Z), P_{\theta}(Z)]$ where $D$ is an arbitrary divergence, and (2) the reconstruction loss between $x$ and $\widetilde{x}$. At a high level, our learning dynamic ensures $\phi_\# P_d(X) = P_\phi(Z) = P_{\theta}(Z)$ so that $z \sim P_\phi(Z)$ follows the prior distribution $P_{\theta}(Z)$. However, such samples cannot ignore information in the input $X$ because we need $\widetilde{x} \sim P_\theta(X \mid Z = z)$ to effectively reconstruct the observed samples.

There might also be a concern that relaxing the push-forward constraint into the divergence term means the backward $\phi$ is forced to mimic the prior, which may lead to a situation similar to \textbf{posterior collapse} notoriously occurring to VAE. We here detail why VAE is prone to this issue and how the OT-based objective mitigates it. 

\paragraph{1. The push-forward divergence:} While the objectives of OTP-DAG/WAE and VAE entail the prior matching term. the two formulations are different in nature. 

Let $Q$ denote the set of variational distributions. If we consider $\phi$ as an encoder, the VAE objective can be written as 
\begin{equation}\label{eq:vae}
    \inf_{\phi(Z|X) \in Q} \mathbb{E}_{X \sim P(X)} [ D_{\textrm{KL}}(\phi(Z|X), P_{\theta}(Z))] - \mathbb{E}_{Z \sim \phi(Z|X)} [\log P_{\theta}(X | Z)].
\end{equation}

By minimizing the above KL divergence term,  VAE basically tries to match the prior $P(Z)$ for all different examples drawn from $P_d(X)$. Under the VAE objective, it is thus easier for $\phi$ to collapse into a distribution independent of $P_d(X)$, where specifically latent codes are close to each other and reconstructed samples are concentrated around only few values.  

For OTP-DAG/WAE, the regularizer in fact penalizes the discrepancy between $P_{\theta}(Z)$ and $P_{\phi}(Z) := \mathbb{E}_{X}[\phi(Z \vert X)] $, which can be optimized using GAN-based, MMD-based or Wasserstein distance. The latent codes of different examples $x \sim P_d(X)$ can lie far away from each other, which allows the model to maintain the dependency between the latent codes and the input. Therefore, it is more difficult for $\phi$ to mimic the prior and trivially satisfy the push-forward constraint. We refer readers to \citet{tolstikhin2017wasserstein} for extensive empirical evidence. 

\paragraph{2. The reconstruction loss:}
At some point of training, there is still a possibility to land at $\phi$ that yields samples $Z$ independent of input $X$. If this occurs, $\phi \# \delta x^{(1)}_c = \phi \# \delta x^{(2)}_c = P(Z)$ for any points $x^{(1)}_c, x^{(2)}_c \sim P_d(X)$. This means $\text{supp}(\phi(X^{1})) = \text{supp}(\phi(X^{2})) = \text{supp}(P_{\theta}(Z))$, so it would result in a very large reconstruction loss because it requires to map $\text{supp}(P_{\theta}(Z))$ to various $X^{1}$ and $X^{2}$. Thus our reconstruction term would heavily penalizes this. In other words, this term explicitly encourages the model to search for $\theta$ that helps reconstruction, thus preventing the model from converging to the backward $\phi$ that produces sub-optimal ancestral samples.  

Meanwhile, for VAE, if the family $Q$ contains all possible conditional distribution $\phi(Z | X)$, its objective is essentially to maximize the marginal log-likelihood $\mathbb{E}_{P(X)}[\log P_{\theta}(X)]$, or minimize the KL divergence $\text{KL}(P_d, P_{\theta})$. It is shown in \citet{dai2020usual} that under posterior collapse, VAE produces poor reconstructions yet the loss can still decrease i.e achieve low negative log-likelihood scores and still able to assign high-probability to the training data. 

In summary, it is such construction and optimization of the backward that prevents OTP-DAG from posterior collapse situation. We here search for $\phi$ within a family of measurable functions and in practice approximate it with deep neural networks. It comes down to empirical decisions to select the architecture sufficiently expressive in each application.

\section{Experimental setup}\label{sup:exp}
In the following, we explain how OTP-DAG algorithm is implemented in practical applications, including how to reparameterize the model distribution, to design the backward mapping and to define the optimization objective. We also here provide the training configurations for our method and the baselines. All models are run on $4$ RTX 6000 GPU cores using Adam optimizer with a fixed learning rate of $1e-3$. Our code is published at \url{https://github.com/isVy08/OTP}.

\subsection{Latent Dirichlet Allocation}\label{sup:lda_exp}
For completeness, let us recap the model generative process. We consider a corpus $\gD$ of $M$ independent documents where each document is a sequence of $N$ words denoted by $W_{1:N} = (W_1, W_2, \cdots, W_N)$. Documents are represented as random mixtures over $K$ latent topics, each of which is characterized by a distribution over words. Let $V$ be the size of a vocabulary indexed by $\{1, \cdots, V\}$. Latent Dirichlet Allocation (LDA) \cite{blei2003latent} dictates the following generative process for every document in the corpus:
\begin{enumerate}[noitemsep]
    \item Choose $\theta \sim \textrm{Dir}(\alpha)$,
    \item Choose $\gamma_k \sim \textrm{Dir}(\beta)$ where $k \in \{1, \cdots, K\}$,
    \item For each of the word positions $n \in \{1, \cdots, N\}$,
    \begin{itemize}
        \item Choose a topic $z_n \sim \textrm{Multi-Nominal}(\theta)$,
        \item Choose a word $w_n \sim \textrm{Multi-Nominal}(\gamma_kn)$,
    \end{itemize}
\end{enumerate}
where $\textrm{Dir}(.)$ is a Dirichlet distribution, $\alpha < 1$ and $\beta$ is typically sparse. $\theta$ is a $K-$dimensional vector that lies in the $(K-1)-$simplex and $\gamma_k$ is a $V-$dimensional vector represents the word distribution corresponding to topic $k$. Throughout the experiments, $K$ is fixed at $10$. 

\paragraph{Parameter estimation.}
We consider the topic-word distribution $\gamma$ as a fixed quantity to be estimated. $\gamma$ is a $K \times V$ matrix where $\gamma_{kn} := P(W_n = 1 \vert Z_n =1)$. The learnable parameters therefore consist of $\gamma$ and $\alpha$. An input document is represented with a $N \times V$ matrix where a word $W_i$ is represented with a one-hot $V-$vector such that the value at the index $i$ in the vocabulary is $1$ and $0$ otherwise. Given $\gamma \in [0,1]^{K \times V}$ and a selected topic $k$, the deterministic forward mapping to generate a document $W$ is defined as  

\begin{equation*}
    W_{1:N} = \psi_{\gamma}(Z) = \text{Cat-Concrete}\big(\text{softmax}(Z'\gamma)\big),
\end{equation*}

where $Z \in \{0, 1\}^K$ is in the one-hot representation (i.e., $Z^k = 1$ if state $k$ is the selected and $0$ otherwise) and $Z'$ is its transpose. By applying the Gumbel-Softmax trick \cite{jang2016categorical,maddison2016concrete}, we re-parameterize the Categorical distribution into a function $\text{Cat-Concrete}(.)$ that takes the categorical probability vector (i.e., sum of all elements equals $1$) and output a relaxed probability vector. To be more specific, given a categorical variable of $K$ categories with probabilities $\big[ p_1, p_2, ..., p_K \big]$, for every the $\text{Cat-Concrete}(.)$ function is defined on each $p_k$ as 
\begin{equation*}
    \text{Cat-Concrete}(p_k) = \frac{\exp \big\{ (\log p_k + G_k)/\tau \big\} }{\sum^{K}_{k=1} \exp 
\big\{ (\log p_k + G_k)/\tau \big\}  },
\end{equation*}

with temperature $\tau$, random noises $G_k$ independently drawn from Gumbel distribution $G_t =-\log(- \log u_t), \ u_t \sim \textrm{Uniform}(0,1)$. 

We next define a backward map that outputs for a document a distribution over $K$ topics by $\phi(Z \mid W_{1:N}) = \textrm{Cat}(Z).$ Given observations $W_{1:N}$, our learning procedure begins by sampling $\widetilde{Z} \sim \phi(Z \vert W_{1:N})$ and pass $\widetilde{Z}$ through the generative process given by $\psi$ to obtain the reconstruction. Notice here that we have a prior constraint over the distribution of $\theta$ i.e., $\theta$ follows a Dirichlet distribution parameterized by $\alpha$. This translates to a push forward constraint in order to optimize for $\alpha$. To facilitate differentiable training, we use softmax Laplace approximation \citep{mackay1998choice,srivastava2017autoencoding} to approximate a Dirichlet distribution with a softmax Gaussian distribution. The relation between $\alpha$ and the Gaussian parameters $\big( \mu_k, \Sigma_k \big)$ w.r.t a category $k$ where $\Sigma_k$ is a diagonal matrix is given as 

\begin{equation}\label{eq:softGau}
    \mu_k (\alpha) = \log \alpha_k - \frac{1}{K} \sum_{i=1}^K \log \alpha_i, \quad \Sigma_k (\alpha) = \frac{1}{\alpha_k} \bigg( 1 - 
    \frac{2}{K} \bigg) + \frac{1}{K^2} \sum_{i=1}^K \frac{1}{\alpha_i}.
\end{equation}

Let us denote $P_{\alpha} := \gN\big( \mu(\alpha), \Sigma(\alpha) \big) \approx \textrm{Dir}(\alpha)$ with $\mu = [\mu_k]_{k=1}^{K}$ and $\Sigma = [\Sigma_k]_{k=1}^{K}$ defined as above. The optimization objective is given as 

\begin{equation*}\label{eq:lda}
     \min_{\alpha, \gamma} \quad \mathbb{E}_{W_{1:N} \sim \gD, \widetilde{Z} \sim \phi(Z \mid W_{1:N})} c \big[ W_{1:N}, \psi_{\gamma}(\widetilde{Z}) \big]  + \eta \ D_{\textrm{WS}} \big( P_{\phi}(Z), P_{\alpha}(Z) \big),
\end{equation*}

where $c$ is cross-entropy loss function and $D_{\textrm{WS}}$ is exact Wasserstein distance\footnote{\url{https://pythonot.github.io/index.html}}. The sampling process $\theta \sim P_{\alpha}$ is also relaxed using standard Gaussian reparameterization trick whereby $\theta = \mu(\alpha) + u \Sigma(\alpha)$ with $u \sim \gN(0,1)$.

\paragraph{Remark.} Our framework in fact learns both $\alpha$ and $\gamma$ at the same time. Our estimates for $\alpha$ (averaged over $K$) are nearly $100\%$ faithful at $0.10, 0.049, 0.033$ (recall that the ground-truth $\alpha$ is uniform over $K$ where $K = 10, 20, 30$ respectively). Figure \ref{fig:lda_convergence} shows the convergence behavior of OTP-DAG during training where our model converges to the ground-truth patterns relatively quickly. 

Figures \ref{fig:topic20} and \ref{fig:topic30} additionally present the topic distributions of each method for the second and third synthetic sets. We use horizontal and vertical patterns in different colors to distinguish topics from one another. Red circles indicate erroneous patterns. Note that these configurations are increasingly more complex, so it requires more training time for all methods to achieve better performance. Although our method may exhibit some inconsistencies in recovering accurate word distributions for each topic, these discrepancies are comparatively less pronounced when compared to the baseline methods. This observation indicates a certain level of robustness in our approach.

\begin{figure}[h!]
\centering
\includegraphics[width=0.70\linewidth]{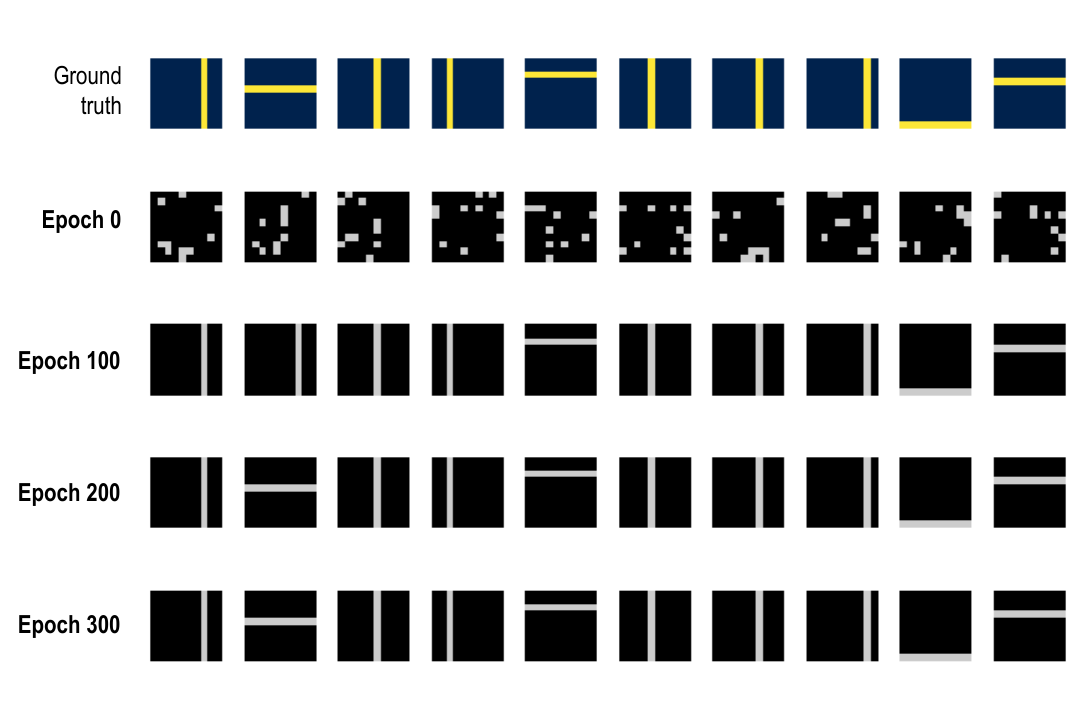}
\vspace{-1em}
\caption{Converging patterns of $10$ random topics from our OTP-DAG after $100, 200, 300$ iterations.}
\label{fig:lda_convergence}
\end{figure}

\begin{figure}[h!]
    \centering
    \includegraphics[width=\linewidth]{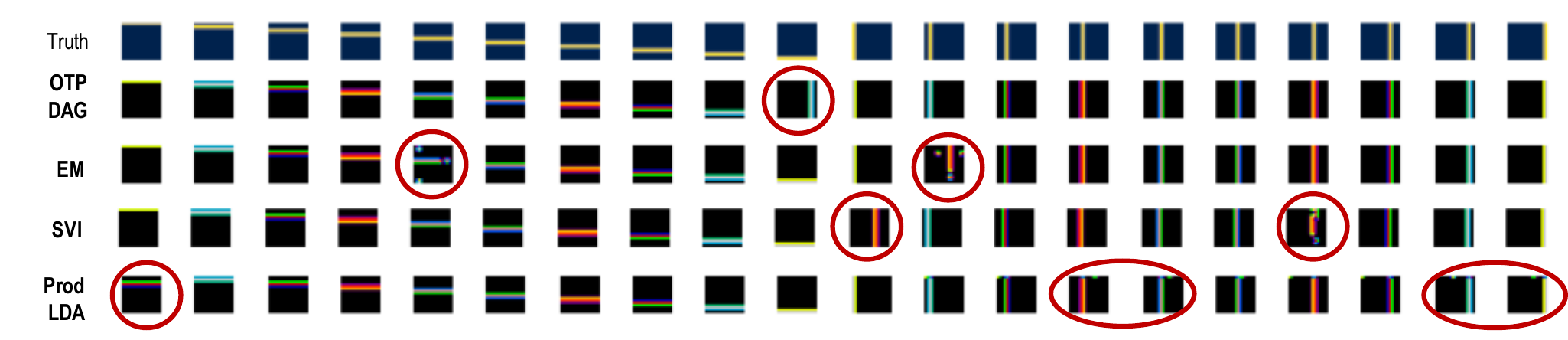}
    \caption{Topic-word distributions inferred by each method from the second set of synthetic data after 300 training epochs.}
    \label{fig:topic20}
\end{figure}

\begin{figure}[h!]
    \centering
    \includegraphics[width=\linewidth]{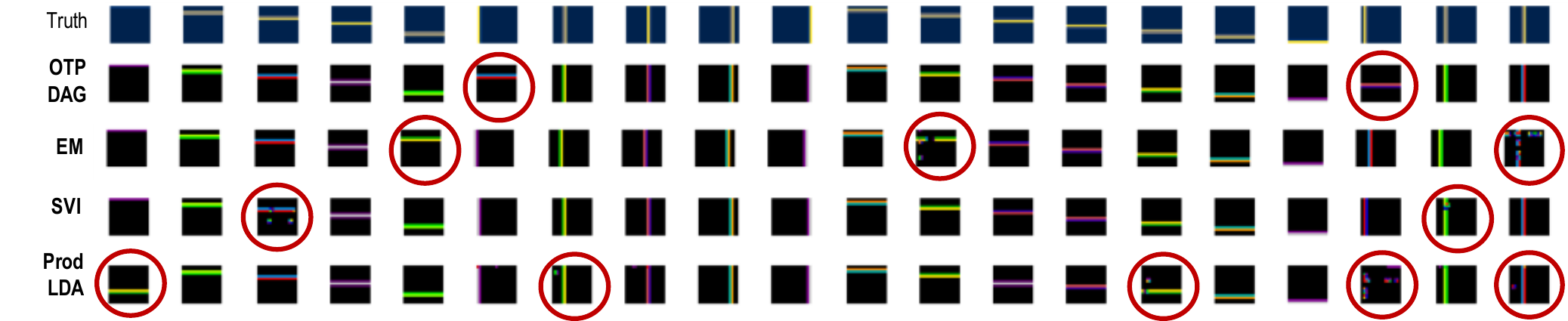}
    \caption{Topic-word distributions inferred by OTP-DAG from the third set of synthetic data after 300 training epochs.}
    \label{fig:topic30}
\end{figure}

\paragraph{Topic inference.}
In this experiment, we apply OTP-DAG on real-world datasets. We here revert to the original generative process where the topic-word distribution follows a Dirichlet distribution parameterized by the concentration parameters $\beta$, instead of having $\gamma$ as a fixed quantity. In this case, $\beta$ is initialized as a matrix of real values i.e., $\beta \in \mathbb{R}^{K \times V}$ representing the log concentration values. The forward process is given as
\begin{equation*}
    W_{1:N} = \psi_{\gamma}(Z) = \text{Cat-Concrete}\big(\text{softmax}(Z'\gamma)\big),
\end{equation*}

where $\gamma_k = \mu_k \big( \exp(\beta_k) \big) + u_k \Sigma_k \big(  \exp(\beta_k)\big)$ and $u_k \sim \gN(0,1)$ is a Gaussian noise. This is realized by using softmax Gaussian trick as in Eq. (\ref{eq:softGau}), then applying standard Gaussian reparameterization trick. The optimization procedure follows the previous application.

\begin{table}[!h]
\caption{Qualitive evaluation of the topics inferred for $3$ real-world datasets.}
\vspace{0.5em}
    \centering
    \resizebox{\linewidth}{!}{
    \begin{tabular}{l | l}
    \toprule
    \multicolumn{2}{c}{20 News Group} \\
    \midrule
    Topic 1 & \textit{car, bike, front, engine, mile, ride, drive, owner, road, buy}\\
    Topic 2&\textit{game, play, team, player, season, fan, win, hit, year, score}\\
    Topic 3&\textit{government, public, key, clipper, security, encryption, law, agency, private, technology}\\
    Topic 4&\textit{religion, christian, belief, church, argument, faith, truth, evidence, human, life}\\
    Topic 5&\textit{window, file, program, software, application, graphic, display, user, screen, format}\\
    Topic 6&\textit{mail, sell, price, email, interested, sale, offer, reply, info, send}\\
    Topic 7&\textit{card, drive, disk, monitor, chip, video, speed, memory, system, board}\\
    Topic 8&\textit{kill, gun, government, war, child, law, country, crime, weapon, death}\\
    Topic 9&\textit{make, time, good, people, find, thing, give, work, problem, call}\\
    Topic 10&\textit{fire, day, hour, night, burn, doctor, woman, water, food, body}\\
        \midrule
    \multicolumn{2}{c}{BBC News} \\ 
    \midrule
    Topic 1 & \textit{rise, growth, market, fall, month, high, economy, expect, economic, price}\\
    Topic 2 & \textit{win, play, game, player, good, back, match, team, final, side}\\
    Topic 3 & \textit{user, firm, website, computer, net, information, software, internet, system, technology} \\
    Topic 4 & \textit{technology, market, digital, high, video, player, company, launch, mobile, phone}\\
   Topic 5 & \textit{election, government, party, labour, leader, plan, story, general, public, minister}\\
    Topic 6 & \textit{film, include, star, award, good, win, show, top, play, actor}\\
    Topic 7 & \textit{charge, case, face, claim, court, ban, lawyer, guilty, drug, trial}\\
    Topic 8 & \textit{thing, work, part, life, find, idea, give, world, real, good}\\
    Topic 9 & \textit{company, firm, deal, share, buy, business, market, executive, pay, group}\\
    Topic 10 & \textit{government, law, issue, spokesman, call, minister, public, give, rule, plan}\\
    \midrule
    \multicolumn{2}{c}{DBLP} \\
    \midrule
    Topic 1&\textit{learning, algorithm, time, rule, temporal, logic, framework, real, performance, function}\\
    Topic 2&\textit{efficient, classification, semantic, multiple, constraint, optimization, probabilistic, domain, process, inference}\\
    Topic 3&\textit{search, structure, pattern, large, language, web, problem, representation, support, machine}\\
    Topic 4&\textit{object, detection, application, information, method, estimation, multi, dynamic, tree, motion}\\
    Topic 5&\textit{system, database, query, knowledge, processing, management, orient, relational, expert, transaction}\\
    Topic 6&\textit{model, markov, mixture, variable, gaussian, topic, hide, latent, graphical, appearance}\\
    Topic 7&\textit{network, approach, recognition, neural, face, bayesian, belief, speech, sensor, artificial}\\
    Topic 8&\textit{base, video, content, code, coding, scalable, rate, streaming, frame, distortion}\\
    Topic 9&\textit{datum, analysis, feature, mining, cluster, selection, high, stream, dimensional, component}\\
    Topic 10&\textit{image, learn, segmentation, retrieval, color, wavelet, region, texture, transform, compression}\\
    
    \bottomrule
    \end{tabular}  
    }    
    \label{tab:lda_quali}
\end{table}

\paragraph{Training configuration.} The underlying architecture of the backward maps consists of an LSTM and one or more linear layers. We train all models for $300$ and $1,000$ epochs with batch size of $50$ respectively for the $2$ applications. We also set $\tau = 1.0, 2.0$ and $\eta = 1e-4, 1e-1$ respectively. 

\subsection{Hidden Markov Models}\label{sup:hmm_exp}
We here attempt to learn a Poisson hidden Markov model underlying a data stream. Given a time series $\gD$ of $T$ steps, the task is to segment the data stream into $K$ different states, each of which is associated with a Poisson observation model with rate $\lambda_k$. The observation at each step $t$ is given as 

\begin{equation*}
    P(X_t \vert Z_t = k) = \mathrm{Poi}(X_t \vert \lambda_k), \quad \text{for} \ k=1, \cdots, K.
\end{equation*}

The Markov chain stays in the current state with probability $p$ and otherwise transitions to one of the other $K-1$ states uniformly at random. The transition distribution is given as 
\begin{equation*}
    z_1 \sim \mathrm{Cat} \bigg( \bigg\{ \frac{1}{4}, \frac{1}{4}, \frac{1}{4}, \frac{1}{4} \bigg\} \bigg), \quad  z_t \vert z_{t-1} \sim \mathrm{Cat} \bigg( 
    \left\{\begin{array}{lr}
        \pi & \text{if } Z_t = Z_{t-1}\\
        \frac{1-\pi}{4-1} & \text{otherwise } 
        \end{array}
        \right\}\bigg)
\end{equation*}

We first apply Gaussian reparameterization on each Poisson distribution, giving rise to a deterministic forward mapping 

$$X_t = \psi_t (Z_t) = Z'_t \exp(\lambda) + u_t \sqrt{Z_t \exp(\lambda)},$$

where $\lambda \in \mathbb{R}^K$ is the learnable parameter vector representing log rates, $u_k \sim \gN(0,1)$ is a Gaussian noise, $Z_t \in \{0,1\}^K$ is in the one-hot representation and $Z'_t$ is its transpose. We define a global backward map $\phi$ that outputs the distributions for individual $Z_t$ as $\phi(Z_t \vert X_t) := \mathrm{Cat}(Z_t \vert X_t).$

The first term in the optimization object is the reconstruction error given by a cost function $c$. The push forward constraint ensures the backward probabilities for the state variables align with the prior transition distributions. Denoting $\psi = [\psi_t]^T_{t=1}$, we learn $\lambda_{1:K}$ by optimizing the following objective

\begin{align*}\label{eq:hmm}
    \min_{\lambda} \quad \mathbb{E}_{X_{1:T} \sim \gD, \widetilde{Z}_{1:T}\sim \phi(Z_{1:T} \vert X_{1:T})}  c \big[ X_{1:T}, \psi(\widetilde{Z}_{1:T}) \big] + \eta \ D_{\textrm{WS}} \big[ P_{\phi} (Z_{1:T}), P_{\pi}(Z_{1:T}) \big].
\end{align*}

In the experiment, we choose $T = 200$ and smooth $L_1$ loss \citep{girshick2015fast} is chosen as the cost function. $D_{\textrm{WS}}$ is exact Wasserstein distance with KL divergence as the ground cost. 

\paragraph{Training configuration.} The underlying architecture of the backward map is a $3-$ layer fully connected perceptron. The Poisson HMM is trained for $50$ epochs with $\eta = 0.1$ and $\tau = 0.1$.

\begin{figure}[hbt!]
\centering
     \begin{subfigure}
         \centering
         \includegraphics[width=0.45\linewidth]{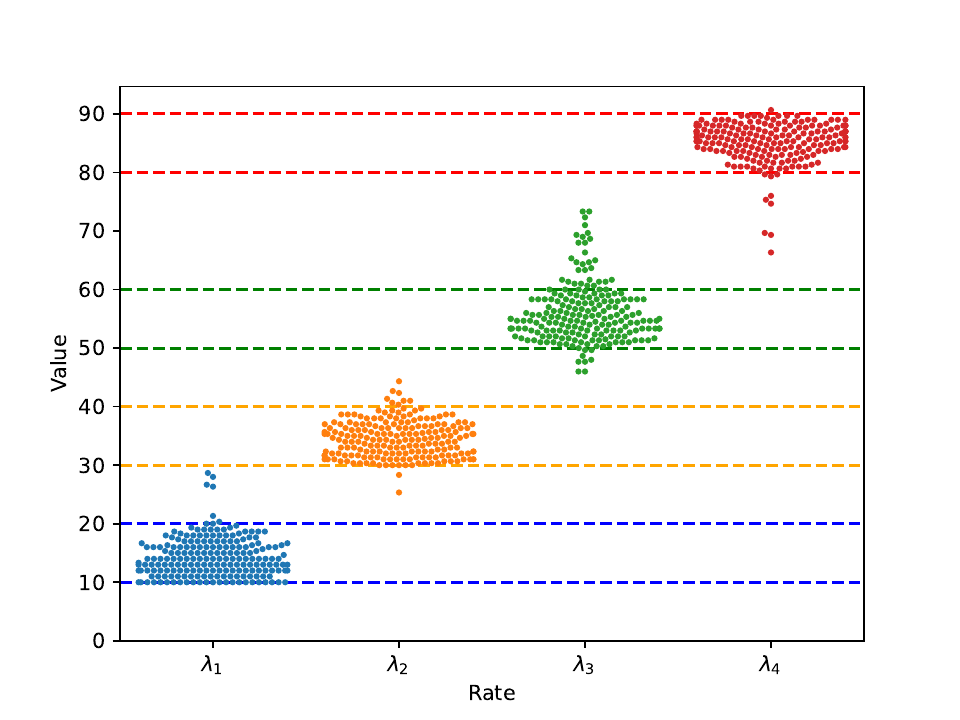}
     \end{subfigure}~
     \hspace{5mm}
     \begin{subfigure}
         \centering
         \includegraphics[width=0.45\linewidth]{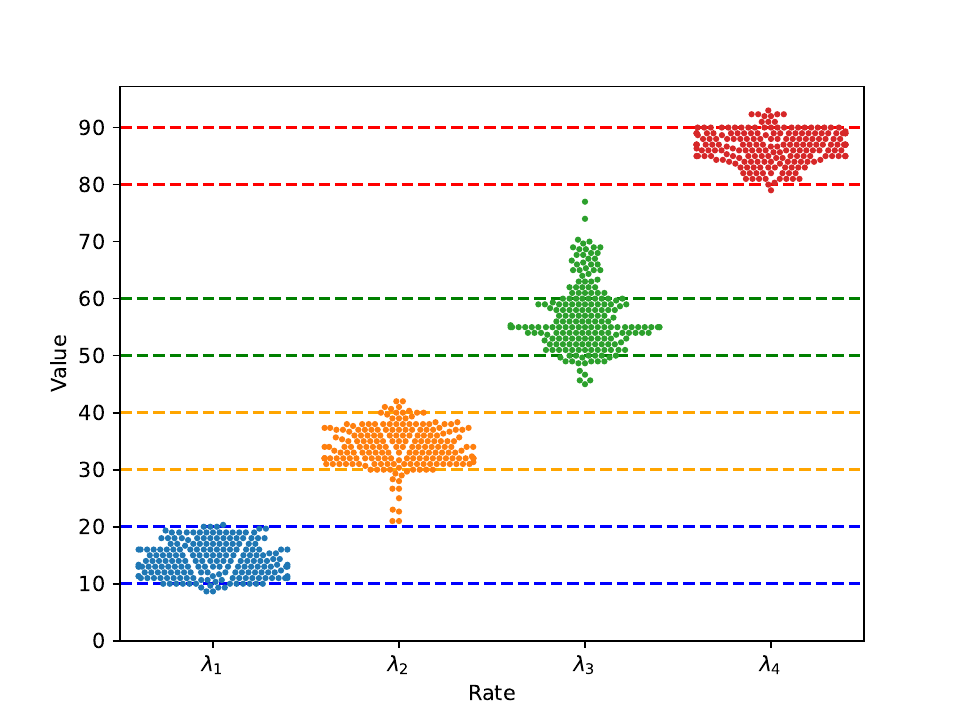}
     \end{subfigure}
     \caption{Distribution of the estimates of the concentration parameters $\lambda_{1:4}$ from OTP-DAG \textbf{(Left)} and EM \textbf{(Right)}. Our estimated distribution aligns more uniformly with the true generative process.} 
     \label{fig:estimate_dist}
\end{figure}

% \paragraph{Additional Experiment.}
% As discussed in Section \ref{sect:hmm}, $p=0.05$ is evidently a poor choice of transition probability to fit the model. With $\eta=0.1$, the regularization effect of the push-forward divergence is relatively weak. Thus, our model can still effectively converge to the true values. 

% However, if we increase the $\eta$ weight and strongly force the model to fit $p = 0.05$, the model performance degrades, in terms of both estimation and reconstruction quality. Table \ref{tab:cp_ablation} provides empirical evidence for this claim, where we report OTP-DAG estimates and reconstruction losses at each $\eta$ value. It can be seen that the model poorly fits the data if strongly forced to fit the wrong prior probabilities. 

% \begin{table}[!h]
% \centering
% \caption{Estimates of $\lambda_{1:4}$ and mean absolute reconstruction error at $p=0.05$ and various $\eta$ values.}
% \vskip 0.15in
% \label{tab:cp_ablation}
% \begin{tabular}{cccccc}
% \toprule
% $\eta$ & $\lambda_1 = 12$ & $\lambda_2 = 87$ & $\lambda_3 = 60$ & $\lambda_4 = 33$ & MAE $\downarrow$ \\
% \midrule
% 0.1 & 11.83   & 87.20   & 60.61   & 33.40   & 7.11  \\
% 0.5 & 11.65   & 87.48   & 61.17   & 33.11   & 7.96  \\
% 0.8 & 11.74   & 86.29   & 60.51   & 33.30   & 8.08  \\
% 1.0 & 11.66   & 86.62   & 60.30   & 33.20   & 8.88  \\
% 2.0 & 12.21   & 86.55   & 60.31   & 33.20   & 9.04  \\
% 5.0 & 13.37   & 84.19   & 60.23   & 34.38   & 15.32 \\
% \bottomrule
% \end{tabular}
% \end{table}

\subsection{Learning Discrete Representations}\label{sup:repl_exp}

To understand vector quantized models, let us briefly review Quantization Variational Auto-Encoder (VQ-VAE) \cite{van2017neural}. The practical setting of VQ-VAE in fact considers a $M-$dimensional discrete latent space $\gC^{M} \in \mathbb{R}^{M \times D}$ that is the $M-$ary Cartesian power of $\gC$ with $\gC = \{c_k\}_{k=1}^K \in \mathbb{R}^{K \times D}$ i.e., $\gC$ here is the set of learnable latent embedding vectors $c_k$. The latent variable $Z = [Z^m]^{M}_{m=1}$ is an $M-$component vector where each component $Z^m \in \gC$. VQ-VAE is an encoder-decoder, in which the encoder $f_e: \gX \mapsto \mathbb{R}^{M \times D}$ maps the input data $X$ to the latent representation $Z$ and the decoder $f_d: \mathbb{R}^{M \times D} \mapsto \gX$ reconstructs the input from the latent representation. However, different from standard VAE, the latent representation used for reconstruction is discrete, which is the projection of $Z$ onto $\gC^{M}$ via the quantization process $Q$. Let $\bar{Z}$ denote the discrete representation. The quantization process is modeled as a deterministic categorical posterior distribution such that
\begin{equation*}
    \bar{Z}^{m} = Q(Z^m) = c_k, 
\end{equation*}
where $k = \underset{k}{\textrm{argmin}} \  d\big(Z^m, c_k\big)$, $Z^m = f^{m}_{e}(X)$ and $d$ is a metric on the latent space. 

In our language, each vector $c_k$ can be viewed as the centroid representing each latent sub-space (or cluster). The quantization operation essentially searches for the closet cluster for every component latent representation $z^m$. VQ-VAE minimizes the following objective function:
\begin{equation*}
 \mathbb{E}_{x \sim \gD} \bigg[ 
    d_x \big[ f_d \big( Q(f_e(x)) \big), x \big]  + 
    d_z \big[ \textbf{sg} \big( f_e(x) \big), \bar{z}  \big] + 
    \beta d_z \big[ f_e(x), \textbf{sg} \big(\bar{z}\big) \big]
 \bigg],
 \end{equation*}
where $\gD$ is the empirical data, $\textbf{sg}$ is the stop gradient operation for continuous training, $d_x, d_z$ are respectively the distances on the data and latent space and $\beta$ is set between $0.1$ and $2.0$ in the original proposal \citep{van2017neural}. 

In our work, we explore a different model to learning discrete representations. Following VQ-VAE, we also consider $Z$ as a $M-$component latent embedding. 
On a $k^{th}$ sub-space (for $k \in \{1, \cdots, K\})$, we impose a Gaussian distribution parameterized by $\mu_k, \Sigma_k$ where $\Sigma_k$ is diagonal. We also endow $M$ discrete distributions over $\mathbf{C}^1,\dots,\mathbf{C}^M$, sharing a common support set as the set of sub-spaces induced by $\{(\mu_k,\Sigma_k)\}_{k=1}^K$:

$$\mathbb{P}_{k,\pi^m}= \sum_{k=1}^{K}\pi_{k}^{m}\delta_{\mu_{k}}, \ \text{for } m=1,\dots,M.$$

with the Dirac delta function $\delta$ and the weights $\pi^m\in\Delta_{K-1}= \{\alpha\geq \boldsymbol{0}: \Vert \alpha\Vert_1 =1\}$ in the $(K-1)$-simplex. The probability a data point $z^m$ belongs to a discrete $k^{th}$ sub-space follows a $K-$way categorical distribution $\pi^m = [\pi^m_1, \cdots, \pi^m_K]$. In such a practical setting, the generative process is detailed as follows
\begin{enumerate}[noitemsep]
    \item For $m \in \{1, \cdots, M\}$, 
    \begin{itemize}
        \item Sample $k \sim \textrm{Cat}(\pi^m)$,
        \item Sample $z^m \sim \gN(\mu_k, \Sigma_k)$,
        \item Quantize $\mu^m_k = Q(z^m)$, 
    \end{itemize}
    \item $x = \psi_{\theta}([z^m]_{m=1}^M, [\mu^m_k]_{m=1}^M)$.
\end{enumerate}

where $\psi$ is a highly non-convex function with unknown parameters $\theta$. $Q$ refers to the quantization of $[z^m]_{m=1}^M$ to $[\mu^m_k]_{m=1}^M$ defined as $\mu^m_k = Q(z^m)$ where $k = \mathrm{argmin}_{k} \ d_z \big(z^m; \mu_k \big)$ and $d_z = \sqrt{(z^m - \mu_k)^{T} \Sigma_{k}^{-1} (z^m - \mu_k)}$ is the Mahalanobis distance. 

The backward map is defined via an encoder function $f_e$ and quantization process $Q$ as 
\begin{equation*}
    \phi(x) = \big[ f_e(x), Q(f_e(x)) \big], \quad z =[z^m]_{m=1}^M = f_e(x), \quad [\mu^m_k]_{m=1}^M = Q(z).
\end{equation*}
 
The learnable parameters are $\{\pi, \mu, \Sigma, \theta\}$ with $\pi = [[\pi^{m}_{k}]_{m=1}^M]_{k=1}^{K}, \mu = [\mu_k]_{k=1}^K, \Sigma = [\Sigma_k]_{k=1}^K$. 

Applying OTP-DAG to the above generative model yields the following optimization objective:
\begin{align*}
\underset{\pi, \mu, \Sigma, \theta}{\mathrm{min}} &\quad
\mathbb{E}_{X\sim \gD} \bigg[ c \big[ X, \psi_{\theta}(Z, \mu_k) \big]   
\bigg] 
+ \frac{\eta}{M} \ \sum_{m=1}^{M} \big[
D_{\textrm{WS}} \big(P_{\phi}(Z^m), P(\widetilde{Z}^m)\big) + 
D_{\textrm{WS}} \big(P_{\phi}(Z^m),\mathbb{P}_{k,\pi^{m}}\big)
\big] \\
&+ \eta_r \ \sum^{M}_{m=1} D_{\textrm{KL}} \big( \pi^m, \gU_K \big),
\end{align*}

where $P_{\phi}(Z^m):=f_{e}^{m}\#P(X)$ given by the backward $\phi$, $P(\widetilde{Z}^m) = \sum^{K}_{k=1} \pi^m_k \gN(\widetilde{Z}^m \vert \mu_k, \Sigma_k)$ is the mixture of Gaussian distributions. The copy gradient trick \citep{van2017neural} is applied throughout to facilitate backpropagation. 

The first term is the conventional reconstruction loss where $c$ is chosen to be mean squared error. Minimizing the second term $D_{\textrm{WS}} \big(P_{\phi}(Z^m),P(\widetilde{Z}^m)\big)$ forces the latent representations to follow the Gaussian distribution $\gN(\mu^m_k, \Sigma^m_k)$. Minimizing the third term $D_{\textrm{WS}}\big(P_{\phi}(Z^m),\mathbb{P}_{k,\pi^{m}}\big)$ encourages every $\mu_k$ to become the clustering centroid of the set of latent representations $Z^m$ associated with it. Additionally, the number of latent representations associated with the clustering centroids are proportional to $\pi^m_{k},k=1,...,K$. Therefore, we use the fourth term $\sum^{M}_{m=1} D_{\textrm{KL}}\big( \pi^m, \gU_K \big)$ to guarantee every centroid is utilized.

\paragraph{Training configuration.} We use the same experiment setting on all datasets. The models have an encoder with two convolutional layers of stride $2$ and
filter size of $4 \times 4$ with ReLU activation, followed by $2$ residual blocks, which contained a $3 \times 3,$ stride $1$ convolutional layer with ReLU activation followed by a $1 \times 1$ convolution. The decoder was similar, with two of these residual blocks followed by two de-convolutional layers. The hyperparameters are: $D = M = 64, K = 512, \eta=1e-3, \eta_r=1.0$, batch size of $32$ and $100$ training epochs.

\paragraph{Evaluation metrics.} The evaluation metrics used include (1) \textbf{SSIM:} the patch-level structure similarity index, which evaluates the similarity between patches of the two images; (2) \textbf{PSNR:} the pixel-level peak signal-to-noise ratio, which measures the similarity between the original and generated image at the pixel level; (3) feature-level \textbf{LPIPS} \citep{zhang2018unreasonable}, which calculates the distance between the feature representations of the two images; (4) the dataset-level Fr'echlet Inception Distance \textbf{(FID)} \citep{heusel2017gans}, which measures the difference between the distributions of real and generated images in a high-dimensional feature space; and (5) \textbf{Perplexity:} the degree to which the latent representations $Z$ spread uniformly over $K$ sub-spaces i.e., all $K$ regions are occupied. Perplexity score is defined as $\exp\big({-\sum_{k=1}^K p_{c_k} \log p_{c_k}}\big)$  where $p_{c_k}= N_{c_k} \slash \sum_{i=1}^K N_{c_i}$ is the probability of the $i^{th}$ codeword being used and $N_{c_i}$ is the number of latent representations associated with the codeword $c_i$. Perplexity is maximized when the distribution over the codebooks is uniform, indicating that all codebooks are utilized equally by the model there is no posterior collapse. Thus, higher perplexity is preferred.

\paragraph{Additional experiment.}
We additionally investigate a recent model called SQ-VAE \citep{takida2022sq} proposed to tackle the issue of codebook utilization. Table \ref{tab:sqvae} reports the performance of SQ-VAE in comparison with our OTP-DAG. We significantly outperform SQ-VAE on Perplexity, showing that our model mitigates codebook collapse issue more effectively, while compete on par with this SOTA model across the other metrics. It is worth noting that our goal here is not to propose any SOTA model to discrete representation learning, but rather to demonstrate the applicability of OTP-DAG on various tasks, particular problems where traditional methods such as EM or mean-field VI cannot simply tackle.

\paragraph{Qualitative examples.}
We first present the generated samples from the CelebA dataset using Image transformer \citep{parmar2018image} as the generative model. From Figure~\ref{fig:quali_celeba}, it can be seen that the discrete representation from the our method can be effectively utilized for image generation with acceptable quality.

We additionally show the reconstructed samples from CIFAR10 dataset for qualitative evaluation. Figures \ref{fig:cifar-original}-\ref{fig:cifar-otp} illustrate that the reconstructions from OTP-DAG have higher visual quality than VQ-VAE. The high-level semantic features of the input image and colors are better preserved with OTP-DAG than VQ-VAE from which some reconstructed images are much more blurry. 

\begin{table*}[h!]
    \centering
 \caption{Quality of image reconstructions}
 \vskip 0.15in
\resizebox{0.90\linewidth}{!}{
    \begin{tabular}{lrcrrrrr}
        \toprule 
Dataset & Method & Latent Size & SSIM $\uparrow$ & PSNR $\uparrow$ & LPIPS $\downarrow$ & rFID $\downarrow$ & Perplexity $\uparrow$
\tabularnewline 
\midrule 
CIFAR10 & \textbf{SQ-VAE}  & 8 $\times$ 8 & 0.80 & \textbf{26.11} & 0.23 & \textbf{55.4} & 434.8  \tabularnewline
 & \textbf{OTP-DAG} (Ours)  & 8 $\times$ 8 & 0.80 & 25.40 & 0.23 & 56.5 & \textbf{498.6} \tabularnewline
 \midrule 
         
MNIST & \textbf{SQ-VAE}  & 8 $\times$ 8 & \textbf{0.99} & \textbf{36.25} & 0.01  & \textbf{3.2} & 301.8 \tabularnewline
& \textbf{OTP-DAG} (Ours)   & 8 $\times$ 8 & 0.98 & 33.62 & 0.01 & 3.3 & \textbf{474.6}
\tabularnewline
 \midrule 
SVHN & \textbf{SQ-VAE} & 8 $\times$ 8 & \textbf{0.96} & \textbf{35.35} & \textbf{0.06} & \textbf{24.8} &  389.8 \tabularnewline
& \textbf{OTP-DAG} (Ours)  & 8 $\times$ 8 & 0.94 & 32.56 & 0.08 & 25.2 & \textbf{462.8} \tabularnewline
\midrule
CELEBA & \textbf{SQ-VAE} & 16 $\times$ 16 & 0.88 & \textbf{31.05} & 0.12 &  14.8 & 427.8 \tabularnewline
& \textbf{OTP-DAG} (Ours)  & 16 $\times$ 16  & 0.88 & 29.77 & \textbf{0.11} & \textbf{13.1} & \textbf{487.5} \tabularnewline
\bottomrule
\end{tabular}
}
\label{tab:sqvae}
\end{table*}

\begin{figure}[hbt!]
    \centering
    \includegraphics[width=0.90\textwidth]{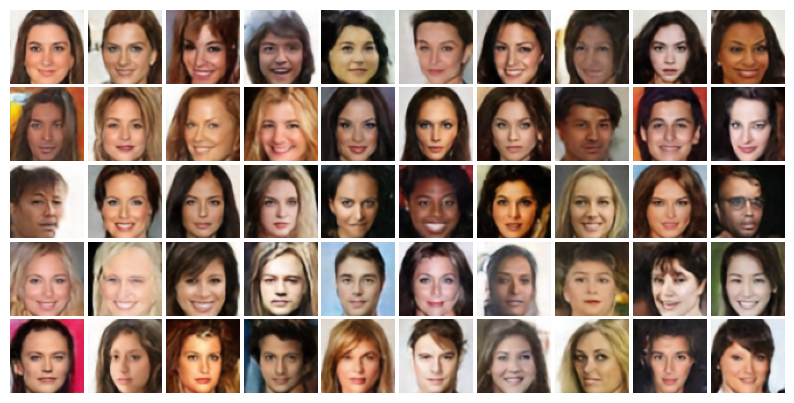}
    \caption{Generated images from the discrete representations of OTP-DAG on CelebA dataset.}
    \label{fig:quali_celeba}
\end{figure}

% \begin{figure}[!h]
% \centering

% \subfloat[\label{fig:DL_original} Original images.]
% {\begin{centering}
% \includegraphics[width=0.90\textwidth]{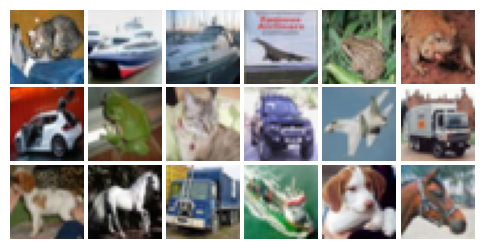}
% \par\end{centering}
%  }
%  \hfill{}
% \subfloat[\label{fig:DL_VQVAE} VQ-VAE.]
% {\begin{centering}
% \includegraphics[width=0.90\textwidth]{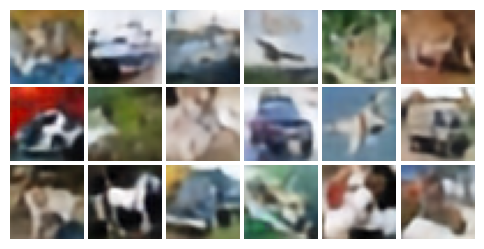}
% \par\end{centering}
%  }
%  \hfill{}
% \subfloat[\label{fig:DL_OTP} OTP-DAG.]
% {\begin{centering}
% \includegraphics[width=0.90\textwidth]{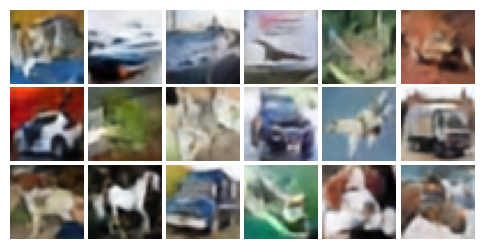}
% \par\end{centering}
% }
% \caption{Random reconstructed images from CIFAR10 dataset.}
% \label{fig:quali_cifar}.
% \end{figure}

\begin{figure*}[ht!]
\centering
     \begin{subfigure}
         \centering\includegraphics[width=0.70\linewidth]{figures/Original0.png}
         \caption{Original CIFAR10 images.}
         \label{fig:cifar-original}
     \end{subfigure} ~
     \begin{subfigure}
         \centering
         \includegraphics[width=0.70\linewidth]{figures/VQVAE0.png}
         \caption{Random reconstructed images by VQ-VAE from CIFAR10 dataset.}
         \label{fig:cifar-vqvae}
     \end{subfigure} ~
     \begin{subfigure}
         \centering
         \includegraphics[width=0.70\linewidth]{figures/OTP0.png}
         \caption{Random reconstructed images by OTP-DAG from CIFAR10 dataset.}
         \label{fig:cifar-otp}
     \end{subfigure}
      % \caption{Random reconstructed images from CIFAR10 dataset.}
\label{fig:quali_cifar10}   
\end{figure*}

\end{document}